\newcites{SM}{Supplementary Material References}
\newcommand{\later}[1]{}
\newcommand{\name}{\textsc{RegVar}}
\newcommand{\namep}{\textsc{RegVar}\xspace}
\newcommand{\E}{\mathbb{E}}
\newcommand{\Var}{\mathrm{Var}}
\newcommand{\g}{\;|\;}
\newcommand{\D}{\mathcal{D}}
\newcommand{\objective}{\mathcal{L}_\theta}
\newcommand{\nnnl}{\nonumber \\}
\newcommand{\at}[1]{\biggr\rvert_{#1}}
\newcommand{\thetahat}{\hat{\theta}}
\newcommand{\dat}{\mathcal{D}}
\newcommand{\act}[1]{\texttt{\color{blue} //{#1}}}
\theoremstyle{plain}
\newtheorem{theorem}{Theorem}[section]
\theoremstyle{definition}
\theoremstyle{remark}
\begin{document}

\runningtitle{Variation Due to Regularization Tractably Recovers Bayesian Deep Learning}

\twocolumn[

\aistatstitle{Variation Due to Regularization Tractably\\Recovers Bayesian Deep Learning}

\aistatsauthor{James McInerney \And Nathan Kallus}

\aistatsaddress{ Netflix Research\\New York, NY, USA\\\texttt{jmcinerney@netflix.com} \And Netflix Research \& Cornell University\\New York, NY, USA\\\texttt{nkallus@netflix.com}} ]

\begin{abstract}
Uncertainty quantification in deep learning is crucial for safe and reliable decision-making in downstream tasks. Existing methods quantify uncertainty at the last layer or other approximations of the network which may miss some sources of uncertainty in the model. To address this gap, we propose an uncertainty quantification method for large networks based on \emph{variation due to regularization}. Essentially, predictions that are more (less) sensitive to the regularization of network parameters are less (more, respectively) certain. This principle can be implemented by deterministically tweaking the training loss during the fine-tuning phase and reflects confidence in the output as a function of all layers of the network. We show that regularization variation (\name) provides rigorous uncertainty estimates that, in the infinitesimal limit, exactly recover the Laplace approximation in Bayesian deep learning. We demonstrate its success in several deep learning architectures, showing it can scale tractably with the network size while maintaining or improving uncertainty quantification quality. Our experiments across multiple datasets show that \namep not only identifies uncertain predictions effectively but also provides insights into the stability of learned representations.
\end{abstract}

\section{INTRODUCTION}

\begin{figure*}[t!]
\vskip 0.2in
\begin{center}
\centerline{\includegraphics[width=1.18\columnwidth]{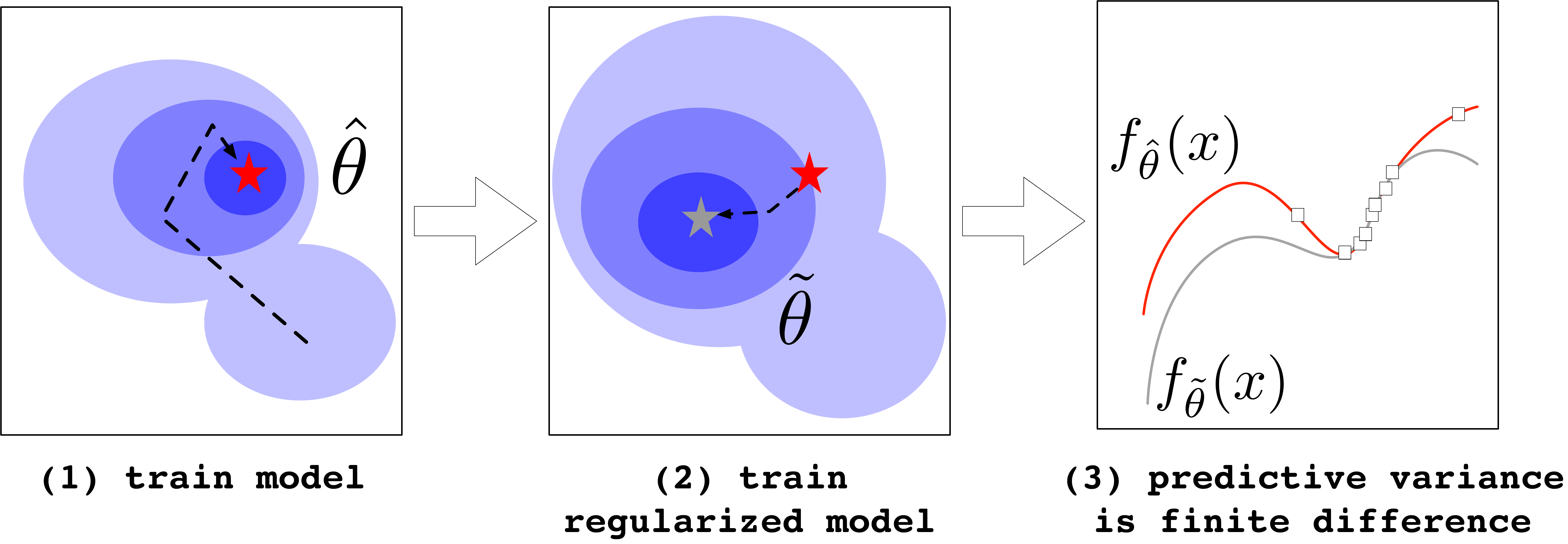}}
\caption{Summary of the regularization-based approach 
to uncertainty quantification that underlies \name. 
{\bf Step~1:} fit a predictive model by MAP; 
{\bf step~2:} fit another model using an infinitesimally regularized MAP;
 % (from scratch or warm-started); 
{\bf step~3:} the difference between the predictions of these two models, normalized by the amount of regularization, is exactly the estimate of the predictive variance. 
}
\label{fig:infographic}
\end{center}
\end{figure*}

\definecolor{darkgrey}{rgb}{0.33, 0.33, 0.33}

\begin{figure*}[t]
\centering
\begin{mdframed}
{\footnotesize 
\begin{Verbatim}[commandchars=\\\{\}]
{\color{gray}// training step to update model parameters theta with training sample (x, y)}
{\color{gray}// n: training data size, lam: scalar hyperparameter}
f = model(x)
loss = cross_entropy(f, y) - log_prior(theta) / n + {\color{blue}lam * f.abs().mean() / n}
loss.backward()
optimizer.step()
...

{\color{gray}//inference time for input x_test}
{\color{blue}f_pred_var = (model(x_test) - f_pred_mean).abs() / lam}
\end{Verbatim}
}
\end{mdframed}
\caption{Example of fine-tuning a model using regularization variation (\name). 
The code required to extend existing fine-tuning to implement \namep is given in blue. 
Any auto-differentiation package is sufficient, here we use PyTorch~\citep{paszke2017automatic}.
}
\label{fig:implement}
\end{figure*}

Bayesian neural networks, where a prior is placed on model weights, offer an opportunity to understand uncertainty and improve model robustness~\citep{wilson2020case, papamarkou2024position}. However, exact computation of the model-weight posterior and predictive distributions is prohibitive given size and complexity. It is instead appealing to approximate it using the maximum \emph{a posteriori} (MAP) solution and the curvature thereat. However, even this curvature, characterized by a Hessian, can be challenging to compute.

The Laplace approximation of the posterior distribution uses 
the (negative) Hessian of the log joint distribution at the MAP~\citep{mackay1992bayesian}. 
For high dimensional $\theta$ -- common in deep neural networks -- 
these steps are a prohibitive computational bottleneck even for an approximate posterior.
One remedy is further approximation of the Hessian,
but this may ignore important directions of curvature. 

In this paper we propose using the principle of variation due to regularization (\name), which
directly approximates the uncertainty of the predicted mean 
in the Laplace approximation without explicitly computing and inverting the Hessian (while still assuming it exists). 
Instead, one additional point estimate is required, 
the \emph{prediction-regularized} MAP, 
derived from the network objective 
with a small amount of regularization added. 
Under the assumptions that the Laplace approximation is used, 
we find that the 
rescaled difference in network outputs 
given by the MAP and the 
prediction-regularized MAP 
recovers the Laplace variance.

Our contributions in this paper are as follows:
\begin{itemize}
	\item We develop a set of novel methods around regularization variation (\name) 
	and formalize it as the predictive variance of linearized Laplace. 
	\item In \name, variance is derived from the 
	change in the MAP prediction when adjusting the regularizer. %This gives rise to a simple algorithm
	The potential appeal of \namep is the simplicity of the method (see Figure~\ref{fig:infographic}) and implementation (see Figure~\ref{fig:implement} for an example).
	\item We show that \namep scales up to large networks and compare it to several other popular Bayesian deep learning approaches. 
\end{itemize}

The rest of the paper is organized as follows. 
In Section~\ref{sec:related}, we discuss related work. 
In Section~\ref{sec:method}, 
we set up the key aspects of the Laplace approximation 
and connect it to regularization variation in Section~\ref{sec:hfl}. 
Empirical evaluation is explored in Section~\ref{sec:experiments} 
before discussing conclusions and future work in Section~\ref{sec:conclusions}.

\section{RELATED WORK}
\label{sec:related}

The Laplace approximation 
was first formulated in BDL 
for small-scale neural networks~\citep{mackay1992}.  
Since then, 
larger architectures 
have necessitated Hessian approximations, 
including Gauss-Newton~\citep{foresee1997gauss} 
and Kronecker factorization that 
places a block diagonal structure on the covariate 
in line with assumed independence between the layers of a network~\citep{martens2015optimizing}.

In recent years, 
with the aforementioned scalable approximations 
and renewed appreciation of the advantages of Laplace 
as a \emph{post-hoc} uncertainty method~\citep{daxberger2021laplace}, 
a number of new directions were addressed, 
including relating neural network loss functions to Gaussian process inference~\citep{khan2019approximate} 
and exploring the advantages of locally linearized Laplace in out-of-distribution evaluation~\citep{foong2019between}. 
Underfitting in Laplace \citep{lawrence2001variational} 
is mitigated in the GGN approximation 
by switching to the generalized linear model 
implied by the local linearization \citep{immer2021improving}. 
Further scaling with linearized Laplace 
applies neural tangent kernels~\citep{deng2022accelerated} 
and variational approximation~\citep{ortega2023variational}. 
For large models, 
Laplace can be applied to a low-rank adaptor~\cite{yang2024bayesian}.

Beyond Laplace-based methods, a number of approaches 
have been proposed for scalable Bayesian inference in neural networks. 
Given the breadth of this topic, 
we give a non-exhaustive account (see \citealp{papamarkou2024position} for further discussion). Variational inference approximates the posterior 
with a parameterized distribution, 
enabling efficient optimization 
via stochastic gradient descent~\citep{graves2011practical, blundell2015weight, shen2024variational}. 
Stochastic gradient Langevin dynamics and related approaches 
that take samples from the optimization trajectory of a neural network 
use gradient noise to approximate Bayesian updates 
without explicit posterior evaluation~\citep{welling2011bayesian, mandt2016variational, maddox2019simple}. 
Dropout-based uncertainty estimation 
interprets dropout training as an 
approximate Bayesian inference method~\citep{gal2016dropout}. 
Ensembles of independently trained networks 
use multiple predictions to quantify the uncertainty of black-box models~\citep{lakshminarayanan2017simple, osband2015bootstrapped}.

Owing to the ubiquity of the Hessian of the objective 
and its computational demands, 
approaches to avoiding its direct evaluation 
are developed in areas of research distinct of Laplace. 
In particular, Hessian-free optimization 
uses the conjugate gradient method to 
iteratively update a matrix-vector product 
as part of an inner loop in 
Newton's method second-order optimization \citep{martens2011learning, pearlmutter1994fast}. 
Finally, \cite{kallus2022implicit} develop a frequentist approach 
establishing that infinitesimal regularization of the total log likelihood 
yields a variance estimate that is asymptotically equivalent to the delta method. 
While avoiding full Fisher matrix computation, this approach requires evaluating a perturbed estimator per test point and output dimension, leading to computational costs that scale accordingly. 
It is also a frequentist perspective, in contrast to the Bayesian approach developed in this paper.

\section{BACKGROUND}

\label{sec:method}

Consider the maximum \emph{a posteriori} (MAP) solution 
of a Bayesian neural network $f_\theta$ 
with parameters~$\theta$ given data $\left\{ (x_i, y_i) \right\}_{i=1}^n$, 
\begin{flalign}
	\thetahat &\in \arg_\theta \max \objective \label{eq:dnn1}\\
	\mathrm{where\;} \objective &= \sum_{i=1}^n \log p(y_i \g f_\theta(x_i)) + \log p(\theta), \label{eq:dnn2}
\end{flalign}
for some observation likelihood $p(y \g \cdot)$ and prior $p(\theta)$. This is equivalent to optimizing a loss (e.g., squared loss for Gaussian likelihood or cross-entropy for categorical likelihood) regularized by $\log p(\theta)$ and means that computing the MAP of a Bayesian neural network is akin to fitting a vanilla neural network.

Our starting point is the 
learning of parameters 
$\theta$ of a deep neural network $f_\theta$ as per Eq.~\ref{eq:dnn1}~and~\ref{eq:dnn2}. 
While the representational power and accuracy of 
the network may be high, 
in many applications it is crucial to 
also quantify the uncertainty of predictions, 
such as in autonomous vehicles, healthcare, or recommendations~\citep{kendall2017uncertainties, leibig2017leveraging}. 

Bayesian deep learning (BDL) provides a framework 
for analyzing uncertainty in deep learning~\citep{papamarkou2024position}. 
The key components are 
the prior distribution $p(\theta)$ and likelihood function $p(y \g \theta, x)$ 
in a family of models indexed by $\theta$, 
and an i.i.d. dataset $\dat = \left\{ (x_i, y_i) \right\}_{i=1}^n$ 
assumed to be collected from a model in this family. 
The posterior distribution $p(\theta \g \dat)$ 
may be flexibly used as the density (or mass) 
for the expectation of any downstream prediction depending on $\theta$, 
e.g. expectation or variance of predicted mean, 
as part of a larger simulation involving fitted deep models, 
or interpreting the variance of weights for different layers of the network. 
The main evaluations of interest in BDL 
are the mean $\E_{p(\theta \g \dat)}[f_\theta(x)]$ and (co)variance $\Var_{p(\theta \g \dat)}[f_\theta(x)]$ 
of the prediction 
for a query vector (or set of query vectors) $x$. 
These will be the focus of our study in this paper.

BDL is doubly intractable. 
First, the normalizing factor in the Bayesian posterior is intractable 
for all but the most trivial networks as it 
entails a sum over high-dimensional $\theta$. 
Many techniques have 
been developed to approximate the posterior -- 
the most common being 
MAP estimation, variational inference, and Markov chain Monte Carlo --  
and these vary in their scalability and applicability 
to deep neural nets \citep{blundell2015weight, ritter2018scalable}. 
A key desideratum 
for approximate inference 
that is often overlooked 
is the extent to which it is compatible 
with existing deep learning frameworks and implementations,  
which may be supported by 
years of model selection, tuning, and vast training budgets. 
The MAP point estimate is the most practical in this sense, 
and is the basis of the Laplace approximation. 
Second, even with an approximate posterior to hand, 
the posterior predictive $p(y \g \dat, x)$ 
depends on $f_{\theta}$, 
meaning that the integral cannot be further simplified. 
Monte Carlo estimation using samples from the posterior  
is used to address this intractability.

\paragraph{Laplace Approximation} 
Let $q$ be the Laplace approximation of the posterior, defined as the second-order Taylor expansion around $\thetahat$,
\begin{flalign}
	\log q(\theta) = \log p(\thetahat \g \dat) - \frac{1}{2} (\theta - \thetahat)^\top P (\theta - \thetahat), \label{eq:laplace}
\end{flalign}
where $P := -\nabla \nabla_\theta \log p(\theta \g \dat)$.
The first derivative does not appear in Eq.~\ref{eq:laplace} because $\thetahat$ is defined as a local optimum of the posterior, where it exists. 

Noticing that Eq.~\ref{eq:laplace} takes the form of a quadratic in $\theta$, we see that $q(\theta) = \mathcal{N}(\thetahat, P^{-1})$. If precision matrix $P$ can be calculated and inverted, then the first intractability 
of approximating the posterior is addressed. In addition, the normal form of $q$ also 
provides a closed-form solution to the model evidence, supporting model and hyperparameter selection \citep{mackay1992}. 
We next discuss precision calculation in more detail. 

\paragraph{Precision Matrix}
Distribution $q(\theta)$ requires the covariance matrix which entails 
calculating and inverting the precision matrix $P$ 
decomposed as, 
\begin{flalign}
    P &= -\nabla \nabla_\theta \log p(\theta \mid \dat) \nonumber\\ 
    &= \underbrace{-\nabla \nabla_\theta \log p(\theta)}_{\text{Gaussian prior} \implies \text{scalar matrix}} \quad - \underbrace{\nabla \nabla_\theta \log p(\dat \mid \theta)}_{\text{Hessian}\; H}. \label{eq:precision}
\end{flalign}
The prior term in Eq.~\ref{eq:precision} usually takes a simple form, 
e.g. constant scalar for Gaussian prior, 
so our focus from this point will be on $H$, the Hessian of the \emph{likelihood}.

The Hessian of the likelihood term in Eq.~\ref{eq:precision} may be further decomposed as, 
\begin{flalign}
    H &= \underbrace{\nabla_f \log p(y \mid f_\theta(x))}_{\text{residual}} {\nabla \nabla_\theta f_\theta(x)} + \notag \\ & 
    \underbrace{\nabla \nabla_f \log p(y \mid f_\theta(x))}_{\text{observation precision}} 
    \quad \nabla_\theta f_\theta(x) \nabla_\theta f_\theta(x)^\top.
    \label{eq:hessian_lik}
\end{flalign}

In practice, the precision calculation of the Laplace approximation 
drops the first term -- the network Hessian -- in Eq.~\ref{eq:hessian_lik} 
due to computational constraints. 
This is known as  
generalized Gauss-Newton (GGN)  
and is proportional to the outer product of the network Jacobian 
(second term in Eq.~\ref{eq:hessian_lik}). 
When GGN 
is used in approximate inference,  
it assumes the network takes the form of a generalized linear model. 
In Laplace, this is equivalent to a local linearization of the network prediction around $\thetahat$,
	\begin{flalign}
		\tilde{f}_\theta(x) = f_{\thetahat}(x) + \nabla_\theta f_{\theta}(x)\at{\thetahat}^\top (\theta - \thetahat), \label{eq:local_linear}
	\end{flalign}
which experimentally results in more accurate predictions than the original network $f_{\thetahat}$ when used in the Monte Carlo average for evaluation~\citep{foong2019between, immer2021improving}. 
However, the Hessian, even after simplifying with GGN, 
is still unwieldy or prohibitive to compute, store, and invert 
for large networks with high dimensional $\theta$.

\section{REGULARIZATION VARIATION (\name)}
\label{sec:hfl}

Against this background, 
we target the same mathematical object as the Laplace approximation, 
requiring the Hessian to exist but 
avoiding the need to calculate or invert it. 
To start, notice that an immediate consequence of Eq.~\ref{eq:local_linear} 
is,
	\begin{flalign}
		\tilde{f}_\theta \sim \mathcal{N}(f_{\thetahat}(x), \nabla_\theta f_{\thetahat}(x)^\top P^{-1} \nabla_\theta f_{\thetahat}(x)), \label{eq:bdm}
	\end{flalign}
where $\nabla_\theta f_{\thetahat} := \nabla_\theta f_{\theta}\at{\thetahat}$ is used for more compact notation. 
This is an instance of the Bayesian delta method~\citep{wasserman2006all}, differing from the classic delta method by the inclusion of the 
prior term in $P$ in Eq.~\ref{eq:precision}. 

The posterior predictive depends on the form of the 
observation likelihood function. 
For regression tasks, Gaussian observation 
noise with fixed standard deviation 
$\sigma$ is the standard, 
resulting in posterior predictive equal to Eq.~\ref{eq:bdm} plus additional variance term $\sigma^2$. 
For classification tasks, 
the posterior predictive is non-analytical, 
requiring Monte Carlo averaging over posterior samples. 
Notwithstanding the particular form of the posterior predictive distribution, 
the \emph{epistemic uncertainty} of the network predictions 
-- meaning, the error due to lack of knowledge --
is represented 
by Eq.~\ref{eq:bdm} 
and has considerable value in active learning, experimental design, and exploration-exploitation. 

We set up the prediction-regularized joint distribution as equal to the joint distribution (Eq.~\ref{eq:dnn2}) with an additional term proportional to the prediction at query point $x$,
\begin{flalign}
	{\objective}^{(f(x),\lambda)} &= \sum_{i=1}^n \log p(y_i \g f_\theta(x_i)) + \log p(\theta) + \lambda f_\theta(x),
	\label{eq:pr_joint}
\end{flalign}
for some constant $\lambda \in \mathbb{R}$. 
The prediction-regularized MAP is defined as,
\begin{flalign}
	\thetahat^{(f(x),\lambda)} \in \arg_\theta \max {\objective}^{(f(x),\lambda)}. \label{eq:opt_theta_lambda}
\end{flalign}

\newcommand{\thetalam}{\hat{\theta}^{(x,\lambda)}}

\begin{theorem}
\label{thrm:hfl}
The derivative of the prediction under the prediction-regularized MAP w.r.t. $\lambda$ recovers the variance-covariance term in Eq.~\ref{eq:bdm},
\begin{flalign}
	\nabla_{\lambda} f_{\thetahat^{(f(x),\lambda)}}(x) \at{\lambda=0} = \nabla_\theta f_{\thetahat}(x)^\top P^{-1} \nabla_\theta f_{\thetahat}(x), \label{eq:hfl1}
\end{flalign}
assuming $\thetahat^{(f(x),\lambda)}$ and $P^{-1}$ exist, and that ${\objective}^{(f(x),\lambda)}$ is continuously differentiable w.r.t. $\theta$. 
\end{theorem}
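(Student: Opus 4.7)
The natural approach is implicit differentiation of the first-order optimality condition defining $\thetahat^{(f(x),\lambda)}$, combined with the chain rule applied to $f_{\theta(\lambda)}(x)$, where I write $\theta(\lambda) := \thetahat^{(f(x),\lambda)}$ for brevity. Since $\theta(\lambda)$ is an interior maximum of the continuously differentiable objective $\objective^{(f(x),\lambda)}$, it satisfies
\begin{flalign*}
\nabla_\theta \objective(\theta(\lambda)) + \lambda \nabla_\theta f_{\theta(\lambda)}(x) = 0,
\end{flalign*}
with $\theta(0) = \thetahat$. I would view this as an implicit equation $G(\theta,\lambda) = 0$ and apply the implicit function theorem.

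First I would differentiate the FOC in $\lambda$ to obtain
\begin{flalign*}
\nabla\nabla_\theta \objective(\theta(\lambda))\, \theta'(\lambda) + \nabla_\theta f_{\theta(\lambda)}(x) + \lambda\, \frac{d}{d\lambda}\nabla_\theta f_{\theta(\lambda)}(x) = 0.
\end{flalign*}
Evaluating at $\lambda = 0$ kills the last term. The key identification is that $\nabla\nabla_\theta \objective(\thetahat) = \nabla\nabla_\theta \log p(\thetahat \mid \dat)$ (the data log-evidence is $\theta$-independent), which by the definition $P := -\nabla\nabla_\theta \log p(\theta \mid \dat)$ equals $-P$. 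Assuming $P^{-1}$ exists, solving the linear system gives
\begin{flalign*}
\theta'(0) = P^{-1}\, \nabla_\theta f_{\thetahat}(x).
\end{flalign*}

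Finally, the chain rule applied to the composition $\lambda \mapsto f_{\theta(\lambda)}(x)$ yields
\begin{flalign*}
\nabla_\lambda f_{\theta(\lambda)}(x)\big|_{\lambda = 0} = \nabla_\theta f_{\thetahat}(x)^\top\, \theta'(0) = \nabla_\theta f_{\thetahat}(x)^\top P^{-1} \nabla_\theta f_{\thetahat}(x),
\end{flalign*}
which is exactly Eq.~\ref{eq:hfl1}. The main obstacle is not the calculation itself, which is short, but justifying that $\lambda \mapsto \theta(\lambda)$ is differentiable at $0$ so that $\theta'(0)$ is well-defined. This follows from the implicit function theorem once one verifies that the Jacobian of the FOC in $\theta$ at $(\thetahat, 0)$, namely $-P$, is nonsingular --- which is precisely the stated assumption that $P^{-1}$ exists together with continuous differentiability of $\objective^{(f(x),\lambda)}$. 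The smoothness of the prediction-regularization term $\lambda f_\theta(x)$ in $\lambda$ is trivial, so the remaining regularity requirements come for free.
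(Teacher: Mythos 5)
Your proposal is correct and follows essentially the same route as the paper's proof: implicit differentiation of the first-order optimality condition in $\lambda$, dropping the $\lambda \nabla^2_\theta f_\theta(x)$ term at $\lambda=0$, solving for $\theta'(0)$ via the (negative) Hessian $-P$, and finishing with the chain rule. Your added remark that differentiability of $\lambda \mapsto \theta(\lambda)$ is justified by the implicit function theorem under nonsingularity of $P$ makes explicit a point the paper only mentions in passing after the theorem statement.
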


\begin{proof} See Appendix~A. 
\end{proof}

A parallel frequentist approach is known as the implicit delta method \citep{kallus2022implicit}. 
The relationship in Eq.~\ref{eq:hfl1} can also be understood as an application of the implicit function theorem (IFT) by Cauchy, e.g. see \citep{lorraine2020optimizing} for a modern treatment of IFT. 

\begin{algorithm*}[t!]
   \caption{Regularization Variation (\name)}
   \label{alg:pt_hfl}
\begin{algorithmic}
   \STATE {\bfseries Input:} network $f_\theta$, fitted MAP $\hat{\theta}$, training data $\D$ of size $n$, evaluation inputs $\{ \hat{x}_{i=1}^m \} $, scalar $\lambda$, step size schedule $\gamma(\cdot)$
   \STATE Warm start $\hat{\theta}^{(\hat{r}, \lambda)} \gets \hat{\theta}$
   \STATE Initialize step counter $j=0$
   \REPEAT
   \STATE Sample training example $(x_i,y_i) \sim \D$
   \STATE Follow gradient $\hat{\theta}^{(\hat{r}, \lambda)} \gets \hat{\theta}^{(\hat{r}, \lambda)} + n \gamma(j) \nabla_\theta \mathcal{L}_i\;\;\;$ \act{ see Eq.~\ref{eq:pt_hfl_abs}} 
   \STATE $j \gets j + 1$
   \UNTIL{$\hat{\theta}^{(\hat{r}, \lambda)}$ converges}
   \STATE {\bfseries Return:} predictive variance function $\hat{\sigma}_f^2(x) := \frac{1}{\lambda} | f_{\hat{\theta}^{(\hat{r}, \lambda)}}(x) - f_{\hat{\theta}}(x) |$
\end{algorithmic}
\end{algorithm*}

A corollary of Theorem~\ref{thrm:hfl} 
is that detecting the local change in prediction 
as $\lambda$ is increased (or decreased) from 0 
yields the variance term of the linearized Laplace approximation in~Eq.~\ref{eq:bdm}. 
The local change, i.e. the LHS of Eq.~\ref{eq:hfl1}, may be calculated 
by finite differences 
for some suitably small~$\lambda$,
\begin{flalign}
	\frac{1}{\lambda} (f_{\thetahat^{(f(x),\lambda)}}(x) - f_{\thetahat}(x)) \underset{\lambda \to 0}{\rightarrow} \nabla_{\lambda} f_{\thetahat^{(f(x),\lambda)}}(x) \at{\lambda=0} .   \label{eq:fd_hfl}
\end{flalign}
It is natural to consider auto-differentiation 
to calculate the LHS of Eq.~\ref{eq:hfl1}. 
From the fact that the optimum $\hat{\theta}^{(f(x),\lambda)}$ also 
changes with $\lambda$ it can be seen that 
auto-differentiation must deal with complex operations on optima 
and is unlikely to lead to a computational advantage 
unless used in tandem with additional work-arounds.

Putting it together, the regularization variation (\name) approximation is,
	\begin{flalign}
		\tilde{f}_\theta \sim \mathcal{N} \left(f_{\thetahat}(x), \frac{1}{\lambda} (f_{\thetahat^{(f(x),\lambda)}}(x) - f_{\thetahat}(x)) \right), \label{eq:hfl}
	\end{flalign}
and differs from linearized Laplace (Eq.~\ref{eq:bdm}) by the lack of an explicit Hessian. 
It is apparent by this comparison  
that \namep trades the computational and storage 
expense of dealing with the Hessian for 
that of a point-wise approximation 
for each query $x$. 
This operation may be amortized into the network, 
which we discuss next.

\subsection{Amortized Regularizer}
\label{sec:method_finetuning}

\namep targets the same predictive variance-covariance as the full linearized Laplace approximation. However, when evaluating the uncertainty of many predictions, the requirement to optimize to the $f$-objective specific to each test point is a bottleneck. In order to scale \namep in the number of test points, we develop an amortized regularization approach that requires only a single model to represent the $f$-regularized parameters for a set of evaluation points. 

Amortization is formulated as follows. 
Find the regularization term $\hat{r}$ 
for a finite set of evaluation inputs ${\bf \hat{x}} = \{\hat{x}_{1:m}\}$ 
that minimizes the average squared L2 norm error of 
the corresponding $f$-regularized parameters, 
\begin{flalign}
	\hat{r} = \arg_{r} \min \sum_{i=1}^m \| \hat{\theta}^{(f(x_i), \lambda)} - \hat{\theta}^{(r, \lambda)}  \|_2^2. \label{eq:pt}
\end{flalign}
Eq.~\ref{eq:pt} facilitates the calculation of Eq.~\ref{eq:hfl} using $\hat{\theta}^{(\hat{r}, \lambda)}$ instead of $\hat{\theta}^{(f(x),\lambda)}$ for arbitrary $x$ in the domain of $f$. 

\begin{theorem}
\label{thrm:pt}
The optimal regularization term defined by Eq.~\ref{eq:pt} is $\hat{r} = \frac{1}{m} \sum_{i=1}^m f(\hat{x}_i)$ 
when $\hat{\theta}^{(f(x),\lambda)}$ and $P^{-1}$ exist and ${\objective}^{(f(x),\lambda)}$ is continuously differentiable w.r.t. $\theta$. 
\end{theorem}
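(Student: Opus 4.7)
The plan is to apply the same implicit-function-theorem (IFT) machinery used in Theorem~\ref{thrm:hfl} separately to $\hat{\theta}^{(f(x_i),\lambda)}$ and $\hat{\theta}^{(r,\lambda)}$, reducing what looks like a minimization over the infinite-dimensional space of regularizers $r$ to a finite-dimensional quadratic minimization over the single vector $u := \nabla_\theta r(\theta)\at{\hat{\theta}}$. Once in this form, the problem becomes a routine least-squares exercise.

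Concretely, I would start by differentiating the first-order optimality condition $\nabla_\theta \objective^{(r,\lambda)}\at{\hat{\theta}^{(r,\lambda)}} = 0$ with respect to $\lambda$, using that $\nabla_\theta^2 \objective\at{\hat{\theta}} = -P$ and $\hat{\theta}^{(r,0)} = \hat{\theta}$. The IFT then yields the first-order expansion
\begin{flalign*}
\hat{\theta}^{(r,\lambda)} = \hat{\theta} + \lambda P^{-1} \nabla_\theta r(\theta)\at{\hat{\theta}} + o(\lambda),
\end{flalign*}
and applying this to each $r = f(x_i)$ as well gives
\begin{flalign*}
\hat{\theta}^{(f(x_i),\lambda)} - \hat{\theta}^{(r,\lambda)} = \lambda P^{-1}\bigl(v_i - u\bigr) + o(\lambda),
\end{flalign*}
where $v_i := \nabla_\theta f_\theta(\hat{x}_i)\at{\hat{\theta}}$.

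Substituting into the objective of Eq.~\ref{eq:pt} reduces it, to leading order in $\lambda$, to $\lambda^2 \sum_{i=1}^m (v_i - u)^\top P^{-2}(v_i - u)$. Since $P$ is symmetric and invertible by assumption, $P^{-2}$ is positive definite, so this is a strictly convex quadratic in $u$. Setting the gradient with respect to $u$ to zero and canceling $P^{-2}$ yields the unique minimizer $u^\star = \frac{1}{m}\sum_{i=1}^m v_i$. The candidate $\hat{r}(\theta) := \frac{1}{m}\sum_{i=1}^m f_\theta(\hat{x}_i)$ then satisfies $\nabla_\theta \hat{r}(\theta)\at{\hat{\theta}} = u^\star$ by linearity of the gradient, so it achieves the optimum.

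The main obstacle I anticipate is conceptual rather than computational: at leading order the objective depends on $r$ only through its gradient at $\hat{\theta}$, so the minimizer is really an equivalence class of functions sharing this gradient, and the theorem is implicitly picking out the natural linear-in-$\{f(\hat{x}_i)\}$ representative. I would flag this, together with the fact that the reduction is an $o(\lambda)$ leading-order statement consistent with the infinitesimal framing of \namep used elsewhere in the paper, as the main items to handle cleanly in a full writeup.
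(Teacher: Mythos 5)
Your proposal is correct and follows essentially the same route as the paper: use the Theorem~\ref{thrm:hfl}/IFT expansion to write $\hat{\theta}^{(\cdot,\lambda)}-\hat{\theta}$ as $\lambda P^{-1}$ times the regularizer's gradient, reduce Eq.~\ref{eq:pt} to a convex quadratic in that gradient, and solve to get the average $\frac{1}{m}\sum_i \nabla_\theta f_\theta(\hat{x}_i)$, realized by $\hat{r}=\frac{1}{m}\sum_i f(\hat{x}_i)$. If anything you are more careful than the paper's own proof, which silently drops the $\lambda$ and $o(\lambda)$ factors and does not flag that the objective only determines $r$ through its gradient at $\hat{\theta}$ (i.e., up to an equivalence class), points you correctly identify as needing explicit handling.
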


\begin{proof} See Appendix~A. 
\end{proof}

Theorem~\ref{thrm:pt} implies that 
embedding the average network output 
in the training objective 
minimizes the L2 norm of the $f$-regularized objective. 
Furthermore, we can absorb 
a sign term into $\lambda$ 
that depends on the sign 
of the output function 
by defining,
\begin{flalign}
	\tilde{\lambda} = \lambda  (-1)^{\mathbb{I}[f_\theta(x) < 0]},
\end{flalign}
where $\mathbb{I}[c]$ is the indicator function that evaluates 
to 1 when the condition $c$ is true and to 0 otherwise. 
This equates to using the absolute values of both the \namep regularizer 
and predictive variance. 
Putting it together leads to the amortized objective,
\begin{flalign}
	\mathcal{L}_i &= \log p(y_i \g f_\theta(x_i)) + \frac{1}{n}\log p(\theta) + \frac{\lambda}{n m} \| f_\theta({\bf \hat{x}}) \|_1, \label{eq:pt_hfl_abs}
\end{flalign}
where the final term in Eq.~\ref{eq:pt_hfl_abs} is the (rescaled) sum of absolute values of 
the function applied to the $m$ evaluation inputs. 
We emphasize that this sum of absolute values 
accommodates networks with multiple outputs, 
a crucial requirement for practical use.

A stochastic optimization algorithm for 
amortized \namep is given in Algorithm~\ref{alg:pt_hfl}. 
The procedure avoids the need to explicitly calculate the Hessian 
and instead optimizes one parameter vector~$\hat{\theta}^{(\hat{r}, \lambda)}$. 
The general form of the objective optimized in 
the algorithm is Eq.~\ref{eq:pt_hfl_abs}. 
We next highlight a special case and a variant 
of Eq.~\ref{eq:pt_hfl_abs}, 
both of which admit simplified forms of~\namep.

\paragraph{Special Case: In-Sample Amortization}

When the train and test data distributions are the same, 
the data sampled to optimize the log joint probability of the model 
may also be used to regularize it, 
giving rise to the single-sample stochastic gradient, 
\begin{flalign}
	\mathcal{L}_i^{\mathrm{(IS)}} &= \log p(y_i \g f_\theta(x_i)) + \frac{1}{n} \log p(\theta) + \frac{\lambda}{nm} |f_\theta(x_i)|.
	\label{eq:in_sample_pt}
\end{flalign}
Alternatively, a data augmentation approach is applicable 
to in-sample training when the likelihood is Gaussian (see Appendix~B).

\paragraph{Variant: Parameter Uncertainty Quantification}

The same amortized regularization strategy also enables 
parameter uncertainty quantification 
by finding the regularizer $\hat{t}$ 
that minimizes the average squared L2 norm 
over the $K$ dimensions of parameters,
\begin{flalign}
	\hat{t} = \arg_{t} \min \sum_{k=1}^K \| \hat{\theta}^{(\theta_k, \lambda)} - \hat{\theta}^{(t, \lambda)}  \|_2^2. \label{eq:pt_theta}
\end{flalign}
Theorem~\ref{thrm:pt} applies to Eq.~\ref{eq:pt_theta} 
in combination with absolute value regularization to 
provide the amortized objective, 
\begin{flalign}
	\mathcal{L}_i^{\mathrm{(PU)}} &= \log p(y_i \g f_\theta(x_i)) + \frac{1}{n}\log p(\theta) + \frac{\lambda}{n m} \| \theta \|_1, \label{eq:pt_hfl_theta} 
\end{flalign}
which corresponds to a set of sparsity-inducing Laplace priors over the weights. 
Thus, when used in concert with a Laplace 
log prior $\log p(\theta)$, 
the regularization term for \namep is absorbed, 
with $\lambda$ perturbing the precision parameter.
In this case, it is particularly convenient 
to implement \namep for parameter uncertainty, 
as one needs only to adjust the strength of the existing regularizer, 
as detailed in Appendix~\ref{app:param_uc}.

\begin{table*}[t!]
\caption{Held-Out Predictive Errors with GPT-3, XL GPT-3, and ViT Architectures} \label{combined-performance-table}
\label{tab:results}
\begin{center}
\footnotesize %\small
\begin{tabular}{lccccccc}
\toprule % Top horizontal line
\textbf{Method} & \multicolumn{2}{c}{\textbf{GPT-3 (125M Param.)}} & \multicolumn{2}{c}{\textbf{XL GPT-3 (1.3B Param.)}} & \multicolumn{2}{c}{\textbf{ViT (86M Param.)}} \\
\cmidrule(lr){2-3} \cmidrule(lr){4-5} \cmidrule(lr){6-7}
 & \textbf{Wikitext} & \textbf{IMDB} & \textbf{Wikitext} & \textbf{IMDB} & \multicolumn{2}{c}{\textbf{Places365}} \\
\midrule % Mid horizontal line
\multicolumn{7}{c}{\parbox{0.5\textwidth}{\centering \textbf{Negative Log Likelihood} \\ ($\downarrow$ is better) $\pm 95\%$ Confidence Interval}} \\
\midrule % Mid horizontal line
Pre-Trained     & $9.781 \pm 0.009$ & $10.374 \pm 0.012$ & $8.228 \pm 0.013$ & $9.886 \pm 0.015$ & \multicolumn{2}{c}{$5.900 \pm 0.002$}\\
MAP             & $1.383 \pm 0.027$ & $9.681 \pm 0.015$ & $1.523 \pm 0.030$ & $9.206 \pm 0.020$ & \multicolumn{2}{c}{$5.871 \pm 0.012$}\\
Ensemble      & $3.925 \pm 0.058$ & $22.84 \pm 0.005$ & $\text{---}$ & $\text{---}$  & \multicolumn{2}{c}{$5.885 \pm 0.010$}\\
MC Dropout      & $1.410 \pm 0.026$ & $9.551 \pm 0.013$ & $1.495 \pm 0.029$ & $9.137 \pm 0.019$ & \multicolumn{2}{c}{$5.878 \pm 0.006$}\\
Laplace Diag    & $1.481 \pm 0.021$ & $9.090 \pm 0.011$ & $1.273 \pm 0.021$ & $8.797 \pm 0.017$ & \multicolumn{2}{c}{$5.872 \pm 0.026$}\\
Laplace Full    & $1.277 \pm 0.021$ & $8.774 \pm 0.009$ & $1.198 \pm 0.021$ & ${8.047} \pm 0.013$ & \multicolumn{2}{c}{$5.869 \pm 0.001$}\\
\namep (this paper)  & ${1.143} \pm 0.021$ & ${8.009} \pm 0.008$ & ${1.151} \pm 0.020$ & $8.138 \pm 0.007$ & \multicolumn{2}{c}{${5.853} \pm 0.014$}\\
\midrule % Mid horizontal line
\multicolumn{7}{c}{\parbox{0.5\textwidth}{\centering \textbf{Expected Calibration Error} \\ ($\downarrow$ is better) $\pm 95\%$ Confidence Interval}} \\
\midrule % Mid horizontal line
Pre-Trained        & $0.163 \pm 0.001$ & $0.239 \pm 0.001$ & $0.233 \pm 0.002$ & $0.287 \pm 0.002$ & \multicolumn{2}{c}{$0.001 \pm 0.000$}\\
MAP             & $0.052 \pm 0.001$ & $0.224 \pm 0.002$ & $0.060 \pm 0.001$ & $0.240 \pm 0.002$ & \multicolumn{2}{c}{$0.009 \pm 0.001$}\\
Ensemble      & $0.153 \pm 0.002$ & $0.747 \pm 0.004$ & $\text{---}$ & $\text{---}$ & \multicolumn{2}{c}{$0.002 \pm 0.000$}\\
MC Dropout      & $0.036 \pm 0.000$ & $0.176 \pm 0.002$ & $0.054 \pm 0.001$ & $0.219 \pm 0.002$ & \multicolumn{2}{c}{$0.005 \pm 0.000$}\\
Laplace Diag    & $0.180 \pm 0.002$ & $0.170 \pm 0.001$ & $0.072 \pm 0.001$ & $0.203 \pm 0.001$ & \multicolumn{2}{c}{$0.001 \pm 0.000$}\\
Laplace Full    & $0.082 \pm 0.001$ & $0.136 \pm 0.001$ & ${0.024 \pm 0.000}$ & $0.126 \pm 0.001$ & \multicolumn{2}{c}{${0.000 \pm 0.000}$}\\
\namep (this paper) & ${0.019 \pm 0.000}$ & ${0.044 \pm 0.001}$ & $0.047 \pm 0.001$ & ${0.024 \pm 0.001}$ & \multicolumn{2}{c}{$0.001 \pm 0.000$}\\
\bottomrule % Bottom horizontal line
\end{tabular}
\end{center}
\end{table*}

\begin{figure*}[t!]
\vskip 0.2in
\begin{center}
\centerline{\includegraphics[width=2.1\columnwidth]{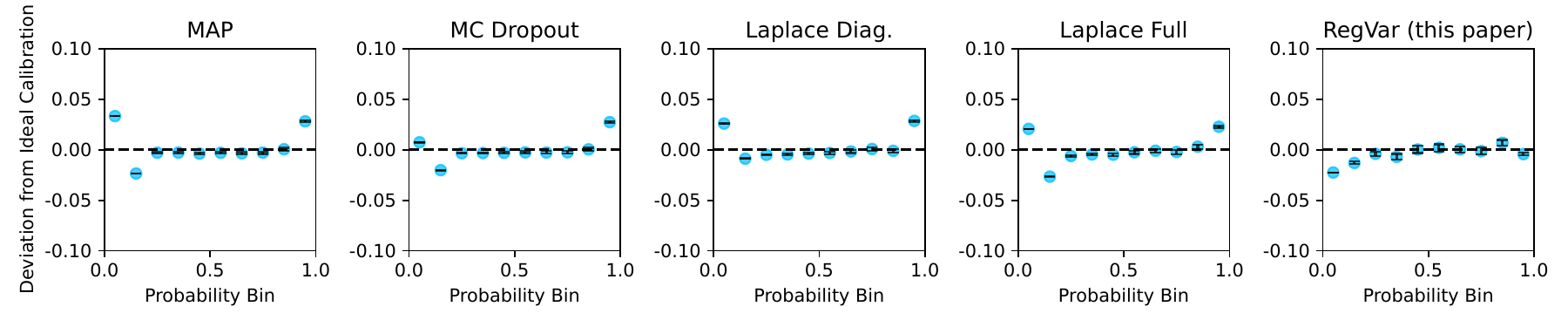}}
\caption{Calibration curves for the methods on the IMDB held-out evaluation with the XL GPT-3 network. Ideal calibration is plotted with dashed lines. 95\% confidence intervals are shown with horizontal markers.}
\label{fig:ece}
\end{center}
\end{figure*}

\section{EXPERIMENTS}
\label{sec:experiments}

We investigate the empirical performance of \namep in 
quantifying uncertainty for large models. 
The goal is to validate the theoretical developments 
from Section~\ref{sec:method} and compare 
to baselines in Bayesian deep learning. 
We next describe the experimental setup. 

\paragraph{Experimental Setup}

The experiments use a Small GPT-3 Architecture language model 
with 125~million parameters and a Vision Transformer (Base) with 86~million parameters. 
To study how the methods scale, we 
also experiment with a larger XL GPT-3 Architecture comprising 1.3 billion parameters. 
For the language models, we load the pre-trained model and fine-tune it to {\bf Wikitext} (wikitext-103-raw) comprising unprocessed text extracted from Wikipedia \citep{merity2016pointer} 
and {\bf IMDB} a binary sentiment classification dataset comprising movie reviews split evenly between positive and negative sentiments~\citep{maas2011learning}. 
For the vision model, we fine-tune the pre-trained model to 
{\bf Places365} (places365), an image dataset of 365 scenes \citep{zhou2017places}.
The fine-tuning language datasets were subsampled to 40k documents split into 80\% train, 10\% validation, and 10\% test, 
while the vision dataset was subsampled to 4k images split into 50\% train, 25\% validation, and 25\%~test. 

The language models were fine-tuned over 1~epoch with stochastic gradient descent using adaptive learning rates,\footnote{Adam with learning rate $2\times10^{-5}$ \citep{kingma2017adam}.} weight decay $0.01$,\footnote{Equivalent to $\mathcal{N}(0,50 I)$ prior on network parameters.} and batch size 4 with 4 gradient accumulation steps. These experiments were conducted on an instance configured with one NVIDIA A100 GPU with 40 GB memory, 12 CPU cores, and 143 GB of system memory. 
The vision model was fine-tuned over 5~epochs with the same settings without the gradient accumulation on an instance configured with 4 NVIDIA A10 GPUs each with 24 GB memory, 96 CPU cores, and 380 GB of system memory. 

\begin{figure}[t!]
\vskip 0.2in
\begin{center}
\centerline{\includegraphics[width=0.9\columnwidth]{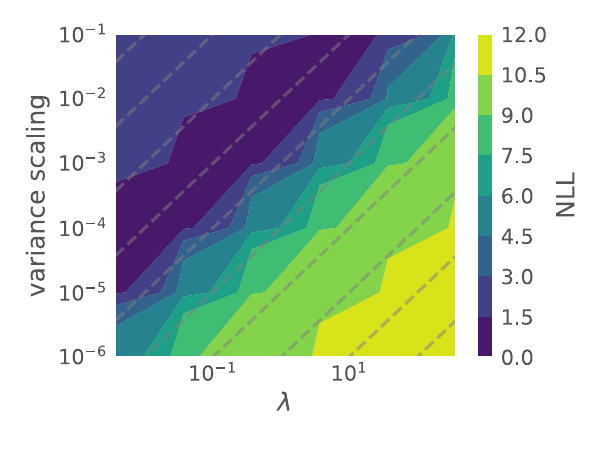}}
\caption{Contour plot of held-out negative log likelihood ($\downarrow$~is~better) for Wikitext validation 
as function of the $\lambda$ used to train \namep and variance rescaling used during inference. Lines isometric to the identity line are shown as gray dashes.}
\label{fig:lambda_contour}
\end{center}
% \vskip -0.2in
\end{figure}

We compared a number of uncertainty quantification methods,
\begin{itemize}
	\item {\bf Maximum \emph{a posteriori} (MAP)} optimizes the log joint distribution w.r.t. the network parameters (see Eq.~\ref{eq:dnn1}). 
	\item {\bf Ensemble Last-Layer} performs MAP estimation with multiple heads at the last layer with a shared network before the last layer \citep{lakshminarayanan2017simple, osband2015bootstrapped}. During training, heads are randomly initialized and, for each training example, a head is sampled at random to receive an update. During inference, the mean and variance of the population of all head evaluations is used for uncertainty. Memory constraints prohibit approximations beyond the last layer and prevent applying ensembling to the XL GPT-3 architecture.
	\item {\bf Monte Carlo Dropout (MC Dropout)} uses dropout to generate approximate samples of the network during inference time \citep{gal2016dropout}. Since dropout is applied at every layer, it is the only other method (apart from \name) that considers uncertainty beyond the last layer. 
	\item {\bf Last-Layer Laplace with Diagonal Covariance (Laplace Diag.)} using the inverse of the diagonal of the GGN approximation of the Hessian of the parameters in the last layer of the network. 
	\item {\bf Last-Layer Laplace with Full Covariance (Laplace Full)} using the inverse of the GGN approximation of the Hessian of the parameters in the last layer of the network. 
	\item {\bf Regularization Variation (\namep)} uses Algorithm~\ref{alg:pt_hfl}. 
\end{itemize}

Further variants of the Laplace approximation can be brought to bear on large models \citep{daxberger2021laplace}. 
Last-layer diagonal/full covariance Laplace are used here 
because we found them, in line with previous studies, 
to be representative of the performance 
of different Laplace approaches. 

\paragraph{Hyperparameters}
All network weights are regularized 
by a Gaussian prior with zero mean and low precision (0.02) during fine-tuning. 
Due to computational constraints, 10 samples are used in the ensemble and MC dropout methods. 
Dropout rate, prior conditioning matrix for Laplace, 
and the $\lambda$ weight in \namep  
are tuned to maximize likelihood on validation data. 
For all methods there is a trade-off between 
accuracy and calibration; 
to control for the confounding 
influence of this trade-off on the metrics, 
we rescale the predicted variance 
by a scalar hyperparameter that is chosen for each method 
to maximize likelihood 
on the validation datasets.

\paragraph{Evaluation}

All methods are evaluated 
by held-out negative log likelihood (NLL) 
and expected calibration error (ECE). 
NLL penalizes predictive uncertainties that 
place more probability on the wrong classes, 
indicating both accuracy and calibration. 
ECE measures calibration directly using 
the weighted average error of binned predictions, 
\newcommand{\ECE}{\text{ECE}}
\newcommand{\binb}{B_m}
\newcommand{\acc}{\text{acc}}
\newcommand{\conf}{\text{conf}}
\begin{flalign}
    \ECE = \sum_{m=1}^{M} \frac{|\binb|}{n} \left| \acc(\binb) - \conf(\binb) \right|,
\end{flalign}
where $M$ is the total number of bins, $B_m$ is the set of samples whose predicted confidence scores fall into the $m$-th bin, $n$ is the total number of samples, $\text{acc}(B_m)$ is the accuracy of the samples in the $m$-th bin, and $\text{conf}(B_m)$ is the average predicted confidence of the samples in the $m$-th bin. We use $M=10$ throughout. 
Confidence intervals are calculated using $\pm 1.96$ standard errors (Wald confidence intervals).

For every method, the first step 
to evaluating the predictive distribution over classes 
is to estimate the logit means $f_\theta(x_i)$ and variances $\hat{\sigma}^2_i$ (rescaled by the aforementioned hyperparameter). 
Uncertainty in the logits is then used in the extended probit approximation \citep{gibbs1998bayesian} 
which introduces a logit correction vector $\kappa$ 
to account for model variance. 
The probit approximation correction is given by,
\begin{flalign}
    \kappa_i = \frac{1}{\sqrt{1 + \frac{\pi \hat{\sigma}^2_i}{8} }}. 
\end{flalign}
The adjusted logits are then used to form predicted probabilities 
over the clases $p(y_i \g \dat, x_i) \propto \exp \{ \kappa_i f_\theta(x_i) \}$.

\begin{figure}[t!]
\vskip 0.2in
\begin{center}
\hspace*{-10mm}  
\centerline{\includegraphics[width=1.05\columnwidth]{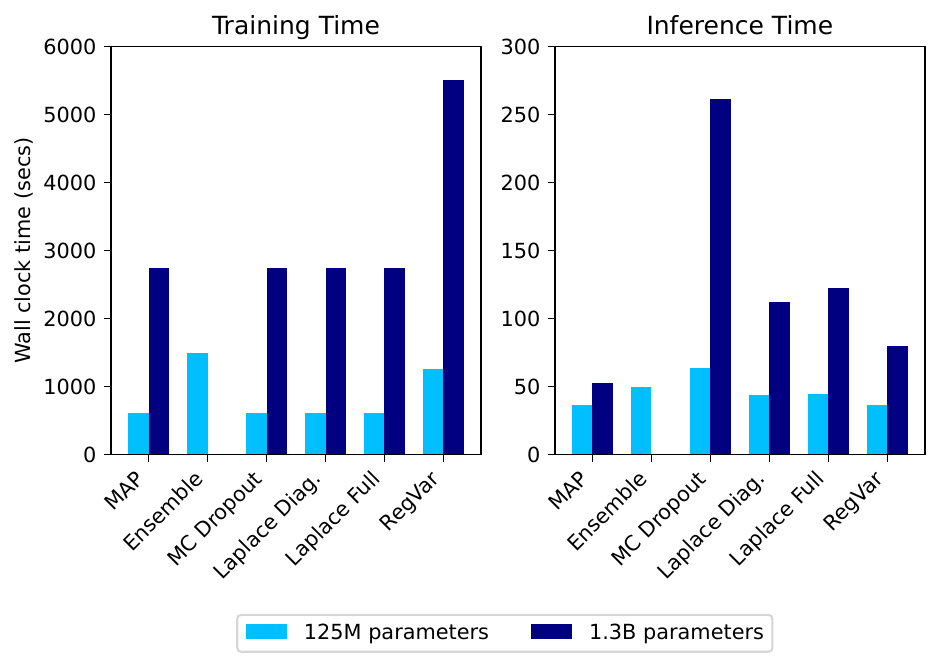}}
\caption{Fine-tune training and inference times for the methods on IMDB dataset. N.B. ensemble was prohibited in the larger model due to memory constraints.}
\label{fig:wall_clock}
\end{center}
\end{figure}

The implementation was done in PyTorch~\citep{paszke2017automatic}
using HuggingFace\footnote{\url{huggingface.co}} to load pre-trained models from \texttt{EleutherAI/\{gpt-neo-125M,gpt-neo-1.3B\}} repositories~\citep{gpt-neo} originally trained on a 800GB text dataset~\citep{gao2020pile} 
and \texttt{google/vit-base-patch16-224-in21k} repositories~\citep{wu2020visual} originally trained on ImageNet-21k~\citep{deng2009imagenet}. 
The code to run the experiments can be accessed publicly on GitHub.\footnote{\url{https://github.com/jamesmcinerney/regvar}}

\paragraph{Results}

A comparison of evaluation metrics for the methods is shown in Table~\ref{tab:results}. 
We find that \namep offers significant advantages over 
MAP across datasets 
and improves or maintains uncertainty quantification 
in comparison to the other methods. 
While Laplace Full (Last Layer) performs robustly across settings, 
improvements with \namep highlight 
the potential benefits of deep uncertainty, 
i.e., including the sources of variance from the entire network, 
which is not tractable to target with explicit Hessians at this scale.  
While NLL measures a combination of accuracy and calibration, 
ECE isolates the contribution of calibration and allows us to explore 
how this breaks down further with calibration curves in Figure~\ref{fig:ece}. 
Overall calibration is good, as evidenced by the small deviation from the ideal line, 
however most methods tend to over-represent extreme predictions (at 0/1) 
while \namep is able to calibrate the predictions away from the extremes. 

We explore the empirical behavior of the $\lambda$~hyperparameter in \namep 
by running fine-tuning with a wide range of different values. Figure~\ref{fig:lambda_contour} 
shows a contour plot of the effect of $\lambda$ (x-axis) and 
the variance rescaling hyperparameter (y-axis), the latter a hyperparameter used in all methods. 
The isometry of the contours suggests 
insensitivity of held-out negative log likelihood 
to the choice of $\lambda$ during fine-tuning. 
The two factors appear to cancel each other out and leave a dataset-dependent constant
rescaling term (approx. $\frac{1}{100}$ for Wikitext) that does not depend on $\lambda$. 

A wall clock time comparison for fine-tuning and inference 
is shown in Figure~\ref{fig:wall_clock}. 
While inference performance of \namep is advantageous 
to all other methods apart from MAP, 
obtaining the regularized network in \namep 
doubles the MAP fine-tuning time, 
a cost that is amenable to further optimization (e.g. warm starting from MAP) 
and parallelization in a similar fashion to ensembling and dropout. 

Additional experiments are detailed in Appendix~C, exploring further settings including using the \namep framework for parameter uncertainty.

\paragraph{Limitations} 
\namep is inherently a local method centered 
around the optimum of the joint distribution 
and the curvature is evaluated at one location (ignoring other modes). 
Early stopping during fine-tuning or pre-training also means 
that the chosen network parameters may not be around the 
mode of the explicit loss. 
The finite differences required by \namep 
are an additional source of error making it important to select $\lambda$ 
small enough, but not so small that underflow errors arise or, more practically, that the noise in the gradients overwhelm the signal from the regularizer. 

\section{CONCLUSIONS \& FUTURE WORK}
\label{sec:conclusions}

In this paper, we explored a new concept in Bayesian deep learning 
that identifies \emph{variation due to regularization} as 
an approximation to the predictive variance. 
We considered this phenomenon from a linearized Laplace perspective 
and evaluated it in large models. 
There is further work to explore in what cases 
it is necessary to use the uncertainty of the whole network for predictions 
to help contextualize the value of \name.

\section*{ACKNOWLEDGEMENTS}
We thank the anonymous AISTATS reviewers for their insightful feedback and suggestions. 

\bibliography{laplace}

%%% -*-BibTeX-*-
%%% Do NOT edit. File created by BibTeX with style
%%% ACM-Reference-Format-Journals [18-Jan-2012].

\begin{thebibliography}{41}

%%% ====================================================================
%%% NOTE TO THE USER: you can override these defaults by providing
%%% customized versions of any of these macros before the \bibliography
%%% command.  Each of them MUST provide its own final punctuation,
%%% except for \shownote{}, \showDOI{}, and \showURL{}.  The latter two
%%% do not use final punctuation, in order to avoid confusing it with
%%% the Web address.
%%%
%%% To suppress output of a particular field, define its macro to expand
%%% to an empty string, or better, \unskip, like this:
%%%
%%% \newcommand{\showDOI}[1]{\unskip}   % LaTeX syntax
%%%
%%% \def \showDOI #1{\unskip}           % plain TeX syntax
%%%
%%% ====================================================================

\ifx \showCODEN    \undefined \def \showCODEN     #1{\unskip}     \fi
\ifx \showDOI      \undefined \def \showDOI       #1{#1}\fi
\ifx \showISBNx    \undefined \def \showISBNx     #1{\unskip}     \fi
\ifx \showISBNxiii \undefined \def \showISBNxiii  #1{\unskip}     \fi
\ifx \showISSN     \undefined \def \showISSN      #1{\unskip}     \fi
\ifx \showLCCN     \undefined \def \showLCCN      #1{\unskip}     \fi
\ifx \shownote     \undefined \def \shownote      #1{#1}          \fi
\ifx \showarticletitle \undefined \def \showarticletitle #1{#1}   \fi
\ifx \showURL      \undefined \def \showURL       {\relax}        \fi
% The following commands are used for tagged output and should be
% invisible to TeX
\providecommand\bibfield[2]{#2}
\providecommand\bibinfo[2]{#2}
\providecommand\natexlab[1]{#1}
\providecommand\showeprint[2][]{arXiv:#2}

\bibitem[\protect\citeauthoryear{Black, Gao, Wang, Leahy, and Biderman}{Black
  et~al\mbox{.}}{2021}]%
        {gpt-neo}
\bibfield{author}{\bibinfo{person}{Sid Black}, \bibinfo{person}{Leo Gao},
  \bibinfo{person}{Phil Wang}, \bibinfo{person}{Connor Leahy}, {and}
  \bibinfo{person}{Stella Biderman}.} \bibinfo{year}{2021}\natexlab{}.
\newblock \bibinfo{title}{{GPT-Neo: Large Scale Autoregressive Language
  Modeling with Mesh-Tensorflow}}.
\newblock
\newblock
\urldef\tempurl%
\url{https://doi.org/10.5281/zenodo.5297715}
\showDOI{\tempurl}


\bibitem[\protect\citeauthoryear{Blundell, Cornebise, Kavukcuoglu, and
  Wierstra}{Blundell et~al\mbox{.}}{2015}]%
        {blundell2015weight}
\bibfield{author}{\bibinfo{person}{Charles Blundell}, \bibinfo{person}{Julien
  Cornebise}, \bibinfo{person}{Koray Kavukcuoglu}, {and} \bibinfo{person}{Daan
  Wierstra}.} \bibinfo{year}{2015}\natexlab{}.
\newblock \showarticletitle{Weight uncertainty in neural networks}. In
  \bibinfo{booktitle}{\emph{International Conference on Machine Learning}}.
  \bibinfo{pages}{1613--1622}.
\newblock


\bibitem[\protect\citeauthoryear{Daxberger, Kristiadi, Immer, Eschenhagen,
  Bauer, and Hennig}{Daxberger et~al\mbox{.}}{2021}]%
        {daxberger2021laplace}
\bibfield{author}{\bibinfo{person}{Erik Daxberger}, \bibinfo{person}{Agustinus
  Kristiadi}, \bibinfo{person}{Alexander Immer}, \bibinfo{person}{Runa
  Eschenhagen}, \bibinfo{person}{Matthias Bauer}, {and}
  \bibinfo{person}{Philipp Hennig}.} \bibinfo{year}{2021}\natexlab{}.
\newblock \showarticletitle{Laplace redux-effortless {B}ayesian deep learning}.
\newblock \bibinfo{journal}{\emph{Advances in Neural Information Processing
  Systems}}  \bibinfo{volume}{34} (\bibinfo{year}{2021}),
  \bibinfo{pages}{20089--20103}.
\newblock


\bibitem[\protect\citeauthoryear{Deng, Dong, Socher, Li, Li, and Fei-Fei}{Deng
  et~al\mbox{.}}{2009}]%
        {deng2009imagenet}
\bibfield{author}{\bibinfo{person}{Jia Deng}, \bibinfo{person}{Wei Dong},
  \bibinfo{person}{Richard Socher}, \bibinfo{person}{Li-Jia Li},
  \bibinfo{person}{Kai Li}, {and} \bibinfo{person}{Li Fei-Fei}.}
  \bibinfo{year}{2009}\natexlab{}.
\newblock \showarticletitle{Imagenet: A large-scale hierarchical image
  database}. In \bibinfo{booktitle}{\emph{2009 IEEE conference on computer
  vision and pattern recognition}}. Ieee, \bibinfo{pages}{248--255}.
\newblock


\bibitem[\protect\citeauthoryear{Deng, Zhou, and Zhu}{Deng
  et~al\mbox{.}}{2022}]%
        {deng2022accelerated}
\bibfield{author}{\bibinfo{person}{Zhijie Deng}, \bibinfo{person}{Feng Zhou},
  {and} \bibinfo{person}{Jun Zhu}.} \bibinfo{year}{2022}\natexlab{}.
\newblock \showarticletitle{Accelerated linearized {L}aplace approximation for
  {B}ayesian deep learning}.
\newblock \bibinfo{journal}{\emph{Advances in Neural Information Processing
  Systems}}  \bibinfo{volume}{35} (\bibinfo{year}{2022}),
  \bibinfo{pages}{2695--2708}.
\newblock


\bibitem[\protect\citeauthoryear{Foong, Li, Hern{\'a}ndez-Lobato, and
  Turner}{Foong et~al\mbox{.}}{2019}]%
        {foong2019between}
\bibfield{author}{\bibinfo{person}{Andrew~YK Foong}, \bibinfo{person}{Yingzhen
  Li}, \bibinfo{person}{Jos{\'e}~Miguel Hern{\'a}ndez-Lobato}, {and}
  \bibinfo{person}{Richard~E Turner}.} \bibinfo{year}{2019}\natexlab{}.
\newblock \showarticletitle{'{I}n-Between' Uncertainty in {B}ayesian Neural
  Networks}.
\newblock \bibinfo{journal}{\emph{arXiv preprint arXiv:1906.11537}}
  (\bibinfo{year}{2019}).
\newblock


\bibitem[\protect\citeauthoryear{Foresee and Hagan}{Foresee and Hagan}{1997}]%
        {foresee1997gauss}
\bibfield{author}{\bibinfo{person}{F~Dan Foresee} {and}
  \bibinfo{person}{Martin~T Hagan}.} \bibinfo{year}{1997}\natexlab{}.
\newblock \showarticletitle{Gauss-{N}ewton approximation to {B}ayesian
  learning}. In \bibinfo{booktitle}{\emph{Proceedings of international
  conference on neural networks (ICNN'97)}}, Vol.~\bibinfo{volume}{3}. IEEE,
  \bibinfo{pages}{1930--1935}.
\newblock


\bibitem[\protect\citeauthoryear{Gal and Ghahramani}{Gal and
  Ghahramani}{2016}]%
        {gal2016dropout}
\bibfield{author}{\bibinfo{person}{Yarin Gal} {and} \bibinfo{person}{Zoubin
  Ghahramani}.} \bibinfo{year}{2016}\natexlab{}.
\newblock \showarticletitle{Dropout as a bayesian approximation: Representing
  model uncertainty in deep learning}. In
  \bibinfo{booktitle}{\emph{international conference on machine learning}}.
  PMLR, \bibinfo{pages}{1050--1059}.
\newblock


\bibitem[\protect\citeauthoryear{Gao, Biderman, Black, Golding, Hoppe, Foster,
  Phang, He, Thite, Nabeshima, et~al\mbox{.}}{Gao et~al\mbox{.}}{2020}]%
        {gao2020pile}
\bibfield{author}{\bibinfo{person}{Leo Gao}, \bibinfo{person}{Stella Biderman},
  \bibinfo{person}{Sid Black}, \bibinfo{person}{Laurence Golding},
  \bibinfo{person}{Travis Hoppe}, \bibinfo{person}{Charles Foster},
  \bibinfo{person}{Jason Phang}, \bibinfo{person}{Horace He},
  \bibinfo{person}{Anish Thite}, \bibinfo{person}{Noa Nabeshima},
  {et~al\mbox{.}}} \bibinfo{year}{2020}\natexlab{}.
\newblock \showarticletitle{The Pile: An 800GB Dataset of Diverse Text for
  Language Modeling}.
\newblock \bibinfo{journal}{\emph{arXiv preprint arXiv:2101.00027}}
  (\bibinfo{year}{2020}).
\newblock


\bibitem[\protect\citeauthoryear{Gibbs}{Gibbs}{1998}]%
        {gibbs1998bayesian}
\bibfield{author}{\bibinfo{person}{Mark~N Gibbs}.}
  \bibinfo{year}{1998}\natexlab{}.
\newblock \emph{\bibinfo{title}{Bayesian Gaussian processes for regression and
  classification}}.
\newblock \bibinfo{thesistype}{Ph.D. Dissertation}.
\newblock


\bibitem[\protect\citeauthoryear{Graves}{Graves}{2011}]%
        {graves2011practical}
\bibfield{author}{\bibinfo{person}{Alex Graves}.}
  \bibinfo{year}{2011}\natexlab{}.
\newblock \showarticletitle{Practical variational inference for neural
  networks}.
\newblock \bibinfo{journal}{\emph{Advances in neural information processing
  systems}}  \bibinfo{volume}{24} (\bibinfo{year}{2011}).
\newblock


\bibitem[\protect\citeauthoryear{Immer, Korzepa, and Bauer}{Immer
  et~al\mbox{.}}{2021}]%
        {immer2021improving}
\bibfield{author}{\bibinfo{person}{Alexander Immer}, \bibinfo{person}{Maciej
  Korzepa}, {and} \bibinfo{person}{Matthias Bauer}.}
  \bibinfo{year}{2021}\natexlab{}.
\newblock \showarticletitle{Improving predictions of {B}ayesian neural nets via
  local linearization}. In \bibinfo{booktitle}{\emph{International Conference
  on Artificial Intelligence and Statistics}}. PMLR, \bibinfo{pages}{703--711}.
\newblock


\bibitem[\protect\citeauthoryear{Kallus and McInerney}{Kallus and
  McInerney}{2022}]%
        {kallus2022implicit}
\bibfield{author}{\bibinfo{person}{Nathan Kallus} {and} \bibinfo{person}{James
  McInerney}.} \bibinfo{year}{2022}\natexlab{}.
\newblock \showarticletitle{The implicit delta method}.
\newblock \bibinfo{journal}{\emph{Advances in Neural Information Processing
  Systems}}  \bibinfo{volume}{35} (\bibinfo{year}{2022}),
  \bibinfo{pages}{37471--37483}.
\newblock


\bibitem[\protect\citeauthoryear{Kendall and Gal}{Kendall and Gal}{2017}]%
        {kendall2017uncertainties}
\bibfield{author}{\bibinfo{person}{Alex Kendall} {and} \bibinfo{person}{Yarin
  Gal}.} \bibinfo{year}{2017}\natexlab{}.
\newblock \showarticletitle{What uncertainties do we need in {B}ayesian deep
  learning for computer vision?}
\newblock \bibinfo{journal}{\emph{Advances in Neural Information Processing
  Systems}}  \bibinfo{volume}{30} (\bibinfo{year}{2017}).
\newblock


\bibitem[\protect\citeauthoryear{Khan, Immer, Abedi, and Korzepa}{Khan
  et~al\mbox{.}}{2019}]%
        {khan2019approximate}
\bibfield{author}{\bibinfo{person}{Mohammad Emtiyaz~E Khan},
  \bibinfo{person}{Alexander Immer}, \bibinfo{person}{Ehsan Abedi}, {and}
  \bibinfo{person}{Maciej Korzepa}.} \bibinfo{year}{2019}\natexlab{}.
\newblock \showarticletitle{Approximate inference turns deep networks into
  {G}aussian processes}.
\newblock \bibinfo{journal}{\emph{Advances in neural information processing
  systems}}  \bibinfo{volume}{32} (\bibinfo{year}{2019}).
\newblock


\bibitem[\protect\citeauthoryear{Kingma and Ba}{Kingma and Ba}{2017}]%
        {kingma2017adam}
\bibfield{author}{\bibinfo{person}{Diederik~P. Kingma} {and}
  \bibinfo{person}{Jimmy Ba}.} \bibinfo{year}{2017}\natexlab{}.
\newblock \showarticletitle{Adam: A Method for Stochastic Optimization}.
\newblock \bibinfo{journal}{\emph{arXiv preprint arXiv:1412.6980}}
  (\bibinfo{year}{2017}).
\newblock


\bibitem[\protect\citeauthoryear{Lakshminarayanan, Pritzel, and
  Blundell}{Lakshminarayanan et~al\mbox{.}}{2017}]%
        {lakshminarayanan2017simple}
\bibfield{author}{\bibinfo{person}{Balaji Lakshminarayanan},
  \bibinfo{person}{Alexander Pritzel}, {and} \bibinfo{person}{Charles
  Blundell}.} \bibinfo{year}{2017}\natexlab{}.
\newblock \showarticletitle{Simple and scalable predictive uncertainty
  estimation using deep ensembles}.
\newblock \bibinfo{journal}{\emph{Advances in neural information processing
  systems}}  \bibinfo{volume}{30} (\bibinfo{year}{2017}).
\newblock


\bibitem[\protect\citeauthoryear{Lawrence}{Lawrence}{2001}]%
        {lawrence2001variational}
\bibfield{author}{\bibinfo{person}{Neil~David Lawrence}.}
  \bibinfo{year}{2001}\natexlab{}.
\newblock \emph{\bibinfo{title}{Variational inference in probabilistic
  models}}.
\newblock \bibinfo{thesistype}{Ph.D. Dissertation}.
\newblock


\bibitem[\protect\citeauthoryear{Leibig, Allken, Ayhan, Berens, and
  Wahl}{Leibig et~al\mbox{.}}{2017}]%
        {leibig2017leveraging}
\bibfield{author}{\bibinfo{person}{Christian Leibig}, \bibinfo{person}{Vaneeda
  Allken}, \bibinfo{person}{Murat~Se{\c{c}}kin Ayhan}, \bibinfo{person}{Philipp
  Berens}, {and} \bibinfo{person}{Siegfried Wahl}.}
  \bibinfo{year}{2017}\natexlab{}.
\newblock \showarticletitle{Leveraging uncertainty information from deep neural
  networks for disease detection}.
\newblock \bibinfo{journal}{\emph{Scientific Reports}} \bibinfo{volume}{7},
  \bibinfo{number}{1} (\bibinfo{year}{2017}), \bibinfo{pages}{17816}.
\newblock


\bibitem[\protect\citeauthoryear{Lorraine, Vicol, and Duvenaud}{Lorraine
  et~al\mbox{.}}{2020}]%
        {lorraine2020optimizing}
\bibfield{author}{\bibinfo{person}{Jonathan Lorraine}, \bibinfo{person}{Paul
  Vicol}, {and} \bibinfo{person}{David Duvenaud}.}
  \bibinfo{year}{2020}\natexlab{}.
\newblock \showarticletitle{Optimizing millions of hyperparameters by implicit
  differentiation}. In \bibinfo{booktitle}{\emph{International Conference on
  Artificial Intelligence and Statistics}}. PMLR, \bibinfo{pages}{1540--1552}.
\newblock


\bibitem[\protect\citeauthoryear{Maas, Daly, Pham, Huang, Ng, and Potts}{Maas
  et~al\mbox{.}}{2011}]%
        {maas2011learning}
\bibfield{author}{\bibinfo{person}{Andrew Maas}, \bibinfo{person}{Raymond~E
  Daly}, \bibinfo{person}{Peter~T Pham}, \bibinfo{person}{Dan Huang},
  \bibinfo{person}{Andrew~Y Ng}, {and} \bibinfo{person}{Christopher Potts}.}
  \bibinfo{year}{2011}\natexlab{}.
\newblock \showarticletitle{Learning word vectors for sentiment analysis}. In
  \bibinfo{booktitle}{\emph{Proceedings of the 49th annual meeting of the
  association for computational linguistics: Human language technologies}}.
  \bibinfo{pages}{142--150}.
\newblock


\bibitem[\protect\citeauthoryear{MacKay}{MacKay}{1992}]%
        {mackay1992}
\bibfield{author}{\bibinfo{person}{David~JC MacKay}.}
  \bibinfo{year}{1992}\natexlab{}.
\newblock \showarticletitle{Bayesian interpolation}.
\newblock \bibinfo{journal}{\emph{Neural Computation}} \bibinfo{volume}{4},
  \bibinfo{number}{3} (\bibinfo{year}{1992}), \bibinfo{pages}{415--447}.
\newblock


\bibitem[\protect\citeauthoryear{Mackay}{Mackay}{1992}]%
        {mackay1992bayesian}
\bibfield{author}{\bibinfo{person}{David~JC Mackay}.}
  \bibinfo{year}{1992}\natexlab{}.
\newblock \showarticletitle{Bayesian methods for adaptive models}.
\newblock \bibinfo{journal}{\emph{Doctoral Thesis, California Institute of
  Technology}} (\bibinfo{year}{1992}).
\newblock


\bibitem[\protect\citeauthoryear{Maddox, Izmailov, Garipov, Vetrov, and
  Wilson}{Maddox et~al\mbox{.}}{2019}]%
        {maddox2019simple}
\bibfield{author}{\bibinfo{person}{Wesley~J Maddox}, \bibinfo{person}{Pavel
  Izmailov}, \bibinfo{person}{Timur Garipov}, \bibinfo{person}{Dmitry~P
  Vetrov}, {and} \bibinfo{person}{Andrew~Gordon Wilson}.}
  \bibinfo{year}{2019}\natexlab{}.
\newblock \showarticletitle{A simple baseline for bayesian uncertainty in deep
  learning}.
\newblock \bibinfo{journal}{\emph{Advances in neural information processing
  systems}}  \bibinfo{volume}{32} (\bibinfo{year}{2019}).
\newblock


\bibitem[\protect\citeauthoryear{Mandt, Hoffman, and Blei}{Mandt
  et~al\mbox{.}}{2016}]%
        {mandt2016variational}
\bibfield{author}{\bibinfo{person}{Stephan Mandt}, \bibinfo{person}{Matthew
  Hoffman}, {and} \bibinfo{person}{David Blei}.}
  \bibinfo{year}{2016}\natexlab{}.
\newblock \showarticletitle{A variational analysis of stochastic gradient
  algorithms}. In \bibinfo{booktitle}{\emph{International conference on machine
  learning}}. PMLR, \bibinfo{pages}{354--363}.
\newblock


\bibitem[\protect\citeauthoryear{Martens and Grosse}{Martens and
  Grosse}{2015}]%
        {martens2015optimizing}
\bibfield{author}{\bibinfo{person}{James Martens} {and} \bibinfo{person}{Roger
  Grosse}.} \bibinfo{year}{2015}\natexlab{}.
\newblock \showarticletitle{Optimizing neural networks with
  {K}ronecker-factored approximate curvature}. In
  \bibinfo{booktitle}{\emph{International conference on machine learning}}.
  PMLR, \bibinfo{pages}{2408--2417}.
\newblock


\bibitem[\protect\citeauthoryear{Martens and Sutskever}{Martens and
  Sutskever}{2011}]%
        {martens2011learning}
\bibfield{author}{\bibinfo{person}{James Martens} {and} \bibinfo{person}{Ilya
  Sutskever}.} \bibinfo{year}{2011}\natexlab{}.
\newblock \showarticletitle{Learning recurrent neural networks with
  {H}essian-free optimization}. In \bibinfo{booktitle}{\emph{Proceedings of the
  28th international conference on machine learning (ICML-11)}}.
  \bibinfo{pages}{1033--1040}.
\newblock


\bibitem[\protect\citeauthoryear{Merity, Xiong, Bradbury, and Socher}{Merity
  et~al\mbox{.}}{2016}]%
        {merity2016pointer}
\bibfield{author}{\bibinfo{person}{Stephen Merity}, \bibinfo{person}{Caiming
  Xiong}, \bibinfo{person}{James Bradbury}, {and} \bibinfo{person}{Richard
  Socher}.} \bibinfo{year}{2016}\natexlab{}.
\newblock \bibinfo{title}{Pointer Sentinel Mixture Models}.
\newblock
\newblock
\showeprint[arxiv]{cs.CL/1609.07843}


\bibitem[\protect\citeauthoryear{Ortega, Santana, and
  Hern{\'a}ndez-Lobato}{Ortega et~al\mbox{.}}{2023}]%
        {ortega2023variational}
\bibfield{author}{\bibinfo{person}{Luis~A Ortega},
  \bibinfo{person}{Sim{\'o}n~Rodr{\'\i}guez Santana}, {and}
  \bibinfo{person}{Daniel Hern{\'a}ndez-Lobato}.}
  \bibinfo{year}{2023}\natexlab{}.
\newblock \showarticletitle{Variational linearized {L}aplace approximation for
  {B}ayesian deep learning}.
\newblock \bibinfo{journal}{\emph{arXiv preprint arXiv:2302.12565}}
  (\bibinfo{year}{2023}).
\newblock


\bibitem[\protect\citeauthoryear{Osband and Van~Roy}{Osband and
  Van~Roy}{2015}]%
        {osband2015bootstrapped}
\bibfield{author}{\bibinfo{person}{Ian Osband} {and} \bibinfo{person}{Benjamin
  Van~Roy}.} \bibinfo{year}{2015}\natexlab{}.
\newblock \showarticletitle{Bootstrapped {T}hompson sampling and deep
  exploration}.
\newblock \bibinfo{journal}{\emph{arXiv preprint arXiv:1507.00300}}
  (\bibinfo{year}{2015}).
\newblock


\bibitem[\protect\citeauthoryear{Papamarkou, Skoularidou, Palla, Aitchison,
  Arbel, Dunson, Filippone, Fortuin, Hennig, Hern{\'a}ndez-Lobato,
  et~al\mbox{.}}{Papamarkou et~al\mbox{.}}{2024}]%
        {papamarkou2024position}
\bibfield{author}{\bibinfo{person}{Theodore Papamarkou}, \bibinfo{person}{Maria
  Skoularidou}, \bibinfo{person}{Konstantina Palla}, \bibinfo{person}{Laurence
  Aitchison}, \bibinfo{person}{Julyan Arbel}, \bibinfo{person}{David Dunson},
  \bibinfo{person}{Maurizio Filippone}, \bibinfo{person}{Vincent Fortuin},
  \bibinfo{person}{Philipp Hennig}, \bibinfo{person}{Jos{\'e}~Miguel
  Hern{\'a}ndez-Lobato}, {et~al\mbox{.}}} \bibinfo{year}{2024}\natexlab{}.
\newblock \showarticletitle{Position: {B}ayesian deep learning is needed in the
  age of large-scale AI}. In \bibinfo{booktitle}{\emph{International Conference
  on Machine Learning}}. PMLR, \bibinfo{pages}{39556--39586}.
\newblock


\bibitem[\protect\citeauthoryear{Paszke, Gross, Chintala, Chanan, Yang, DeVito,
  Lin, Desmaison, Antiga, and Lerer}{Paszke et~al\mbox{.}}{2017}]%
        {paszke2017automatic}
\bibfield{author}{\bibinfo{person}{Adam Paszke}, \bibinfo{person}{Sam Gross},
  \bibinfo{person}{Soumith Chintala}, \bibinfo{person}{Gregory Chanan},
  \bibinfo{person}{Edward Yang}, \bibinfo{person}{Zachary DeVito},
  \bibinfo{person}{Zeming Lin}, \bibinfo{person}{Alban Desmaison},
  \bibinfo{person}{Luca Antiga}, {and} \bibinfo{person}{Adam Lerer}.}
  \bibinfo{year}{2017}\natexlab{}.
\newblock \showarticletitle{Automatic differentiation in {P}y{T}orch}.
\newblock  (\bibinfo{year}{2017}).
\newblock


\bibitem[\protect\citeauthoryear{Pearlmutter}{Pearlmutter}{1994}]%
        {pearlmutter1994fast}
\bibfield{author}{\bibinfo{person}{Barak~A Pearlmutter}.}
  \bibinfo{year}{1994}\natexlab{}.
\newblock \showarticletitle{Fast exact multiplication by the {H}essian}.
\newblock \bibinfo{journal}{\emph{Neural computation}} \bibinfo{volume}{6},
  \bibinfo{number}{1} (\bibinfo{year}{1994}), \bibinfo{pages}{147--160}.
\newblock


\bibitem[\protect\citeauthoryear{Ritter, Botev, and Barber}{Ritter
  et~al\mbox{.}}{2018}]%
        {ritter2018scalable}
\bibfield{author}{\bibinfo{person}{Hippolyt Ritter},
  \bibinfo{person}{Aleksandar Botev}, {and} \bibinfo{person}{David Barber}.}
  \bibinfo{year}{2018}\natexlab{}.
\newblock \showarticletitle{A scalable {L}aplace approximation for neural
  networks}. In \bibinfo{booktitle}{\emph{6th International Conference on
  Learning Representations, ICLR 2018-Conference Track Proceedings}}.
\newblock


\bibitem[\protect\citeauthoryear{Shen, Daheim, Cong, Nickl, Marconi, Raoul,
  Yokota, Gurevych, Cremers, Khan, and M\"{o}llenhoff}{Shen
  et~al\mbox{.}}{2024}]%
        {shen2024variational}
\bibfield{author}{\bibinfo{person}{Yuesong Shen}, \bibinfo{person}{Nico
  Daheim}, \bibinfo{person}{Bai Cong}, \bibinfo{person}{Peter Nickl},
  \bibinfo{person}{Gian~Maria Marconi}, \bibinfo{person}{Bazan Clement
  Emile~Marcel Raoul}, \bibinfo{person}{Rio Yokota}, \bibinfo{person}{Iryna
  Gurevych}, \bibinfo{person}{Daniel Cremers},
  \bibinfo{person}{Mohammad~Emtiyaz Khan}, {and} \bibinfo{person}{Thomas
  M\"{o}llenhoff}.} \bibinfo{year}{2024}\natexlab{}.
\newblock \showarticletitle{Variational Learning is Effective for Large Deep
  Networks}. In \bibinfo{booktitle}{\emph{International conference on machine
  learning}}. PMLR, \bibinfo{pages}{44665--44686}.
\newblock


\bibitem[\protect\citeauthoryear{Wasserman}{Wasserman}{2006}]%
        {wasserman2006all}
\bibfield{author}{\bibinfo{person}{Larry Wasserman}.}
  \bibinfo{year}{2006}\natexlab{}.
\newblock \bibinfo{booktitle}{\emph{All of nonparametric statistics}}.
\newblock \bibinfo{publisher}{Springer}.
\newblock


\bibitem[\protect\citeauthoryear{Welling and Teh}{Welling and Teh}{2011}]%
        {welling2011bayesian}
\bibfield{author}{\bibinfo{person}{Max Welling} {and} \bibinfo{person}{Yee~W
  Teh}.} \bibinfo{year}{2011}\natexlab{}.
\newblock \showarticletitle{Bayesian learning via stochastic gradient Langevin
  dynamics}. In \bibinfo{booktitle}{\emph{Proceedings of the 28th international
  conference on machine learning (ICML-11)}}. \bibinfo{pages}{681--688}.
\newblock


\bibitem[\protect\citeauthoryear{Wilson}{Wilson}{2020}]%
        {wilson2020case}
\bibfield{author}{\bibinfo{person}{Andrew~Gordon Wilson}.}
  \bibinfo{year}{2020}\natexlab{}.
\newblock \showarticletitle{The case for {B}ayesian deep learning}.
\newblock \bibinfo{journal}{\emph{arXiv preprint arXiv:2001.10995}}
  (\bibinfo{year}{2020}).
\newblock


\bibitem[\protect\citeauthoryear{Wu, Xu, Dai, Wan, Zhang, Yan, Tomizuka,
  Gonzalez, Keutzer, and Vajda}{Wu et~al\mbox{.}}{2020}]%
        {wu2020visual}
\bibfield{author}{\bibinfo{person}{Bichen Wu}, \bibinfo{person}{Chenfeng Xu},
  \bibinfo{person}{Xiaoliang Dai}, \bibinfo{person}{Alvin Wan},
  \bibinfo{person}{Peizhao Zhang}, \bibinfo{person}{Zhicheng Yan},
  \bibinfo{person}{Masayoshi Tomizuka}, \bibinfo{person}{Joseph Gonzalez},
  \bibinfo{person}{Kurt Keutzer}, {and} \bibinfo{person}{Peter Vajda}.}
  \bibinfo{year}{2020}\natexlab{}.
\newblock \bibinfo{title}{Visual Transformers: Token-based Image Representation
  and Processing for Computer Vision}.
\newblock
\newblock
\showeprint{2006.03677}


\bibitem[\protect\citeauthoryear{Yang, Robeyns, Wang, and Aitchison}{Yang
  et~al\mbox{.}}{2024}]%
        {yang2024bayesian}
\bibfield{author}{\bibinfo{person}{Adam~X. Yang}, \bibinfo{person}{Maxime
  Robeyns}, \bibinfo{person}{Xi Wang}, {and} \bibinfo{person}{Laurence
  Aitchison}.} \bibinfo{year}{2024}\natexlab{}.
\newblock \showarticletitle{Bayesian low-rank adaptation for large language
  models}. In \bibinfo{booktitle}{\emph{12th International Conference on
  Learning Representations, ICLR 2024-Conference Track Proceedings}}.
\newblock


\bibitem[\protect\citeauthoryear{Zhou, Lapedriza, Khosla, Oliva, and
  Torralba}{Zhou et~al\mbox{.}}{2017}]%
        {zhou2017places}
\bibfield{author}{\bibinfo{person}{Bolei Zhou}, \bibinfo{person}{Agata
  Lapedriza}, \bibinfo{person}{Aditya Khosla}, \bibinfo{person}{Aude Oliva},
  {and} \bibinfo{person}{Antonio Torralba}.} \bibinfo{year}{2017}\natexlab{}.
\newblock \showarticletitle{Places: A 10 million image database for scene
  recognition}.
\newblock \bibinfo{journal}{\emph{IEEE transactions on pattern analysis and
  machine intelligence}} \bibinfo{volume}{40}, \bibinfo{number}{6}
  (\bibinfo{year}{2017}), \bibinfo{pages}{1452--1464}.
\newblock


\end{thebibliography}
\bibliographystyle{ACM-Reference-Format}

\newpage 
\onecolumn
\appendix

\section{PROOFS}
\label{app:proofs}
\label{app:hfl_proof}

We detail here the proofs of the theorems given in the main text. Table~\ref{tab:notation} summarizes the notation used.

{\bf Theorem 4.1} \emph{
The derivative of the prediction under the prediction-regularized MAP w.r.t. $\lambda$ recovers the variance-covariance term in Eq.~7, 
\begin{flalign}
	\nabla_{\lambda} f_{\thetahat^{(f(x),\lambda)}}(x) \at{\lambda=0} = \nabla_\theta f_{\thetahat}(x)^\top P^{-1} \nabla_\theta f_{\thetahat}(x), \label{eq:hfl1_app}
\end{flalign}
assuming $\thetahat^{(f(x),\lambda)}$ and $P^{-1}$ exist, and that ${\objective}^{(x,\lambda)}$ is continuously differentiable w.r.t. $\theta$. 
}

\begin{proof} 
To improve readability we use $(x, \lambda)$ in this proof to refer to fact that evaluation $x$ is passed to the regularizing term (i.e. always $f(x)$). 
Consider the derivative w.r.t. $\theta$ of the prediction-regularized objective at the optimum,
\begin{flalign}
	&\left. \nabla_\theta {\objective}^{(x,\lambda)}\right|_{\theta = \hat{\theta}^{(x,\lambda)}}\nnnl
	&= \nabla_\theta \left. \mathcal{L}_{\theta}\right|_{\theta = \hat{\theta}^{(x,\lambda)}} + \lambda \nabla_\theta \left. f_\theta(x)\right|_{\theta = \hat{\theta}^{(x,\lambda)}} \nnnl
	&= 0,  \label{eq:map_opt}
\end{flalign}
by definition in Eq.~8 
and by definition of the optimum. 
Now, differentiate Eq.~\ref{eq:map_opt} w.r.t. $\lambda$, 
applying the chain rule and noticing that $\thetalam$ is a function of $\lambda$,
\begin{flalign}
&\nabla^2_\theta \left. \mathcal{L}_{\theta}\right|_{\theta = \thetalam} \nabla_\lambda \thetalam + \nabla_\theta \left. f_\theta(x)\right|_{\theta = \hat{\theta}^{(x,\lambda)}}\nnnl 
&+ \lambda \nabla^2_\theta \left. f_\theta(x)\right|_{\theta = \hat{\theta}^{(x,\lambda)}}\nabla_\lambda \thetalam = 0.  \label{eq:map_diff_lambda}
\end{flalign}
Evaluating Eq.~\ref{eq:map_diff_lambda} at $\lambda=0$ eliminates the term involving the Hessian of the output. Since, by assumption, the inverse Hessian of the log joint exists, rearrange the terms, 
\begin{flalign}
	\nabla_\lambda \left. \thetalam\right|_{\lambda=0} = - \left(\nabla^2_\theta \left. \mathcal{L}_\theta\right|_{\theta = \hat{\theta}^{(x,0)}} \right)^{-1} \nabla_\theta \left. f_\theta(x)\right|_{\theta = \hat{\theta}^{(x,0)}}
	\label{eq:pre_hfl}
\end{flalign}
Finally, substituting Eq.~\ref{eq:pre_hfl} into the chain rule expansion of~$\nabla_\lambda \left. f_{\thetalam}(x)\right|_{\lambda=0}$ yields,
\begin{flalign}
	&\nabla_\lambda \left. f_{\thetalam}(x)\right|_{\lambda=0} \nnnl
	&= - \nabla_\theta \left. f_\theta(x)\right|_{\theta = \hat{\theta}}^\top \left(\nabla^2_\theta \left. \mathcal{L}_\theta\right|_{\theta = \hat{\theta}} \right)^{-1} \nabla_\theta \left. f_\theta(x)\right|_{\theta = \hat{\theta}}.
\end{flalign}

This recovers Eq.~10 
by substituting with the definition of~$P$~(Eq.~4). 
\end{proof}

{\bf Theorem 4.2} \emph{The optimal regularization term defined by Eq.~13 
is $\hat{r} = \frac{1}{M} \sum_{m=1}^M f(\hat{x}_m)$ 
when $\hat{\theta}^{(f(x),\lambda)}$ and $P^{-1}$ exist and ${\objective}^{(f(x),\lambda)}$ is continuously differentiable w.r.t. $\theta$. 
}

\begin{proof}
Take the derivative of the objective in Eq.~13 
w.r.t.~$r$,  
\begin{flalign}
&\nabla_r \sum_{m=1}^M \| \hat{\theta}^{(f(x_m), \lambda)} - \hat{\theta}^{(r, \lambda)}  \|_2^2 \nnnl
&=\nabla_r \sum_{m=1}^M \| (\hat{\theta}^{(f(x_m), \lambda)} - \hat{\theta}) - (\hat{\theta}^{(r, \lambda)} - \hat{\theta}))  \|_2^2 \nnnl
&= \nabla_r \sum_{m=1}^M \| P^{-1} \nabla_\theta f(\hat{x}_m) -  P^{-1} \nabla_\theta r  \|_2^2 \label{eq:pt_P}\\
&= \sum_{m=1}^M \nabla_r \| P^{-1} (\nabla_\theta f(\hat{x}_m) - \nabla_\theta r)  \|_2^2 \nnnl
&= 2(P^{-1})^\top P^{-1} ( \sum_{m=1}^M (\nabla_\theta f(x_m) - \nabla_\theta r)), \label{eq:pt_obj}
\end{flalign}
using Theorem~4.1 
to express the difference 
in optimal parameters in terms involving the inverse Hessian in Eq.~\ref{eq:pt_P}. 
Finally, set the derivative in Eq.~\ref{eq:pt_obj} to zero 
and solve for~$r$, the unique minimizer of the objective (up to a constant that does not depend on $\theta$). 
\end{proof}

\captionsetup{justification=centering}
\begin{table}[t!]
    \centering
    \caption{\textbf{Notation Summary}}
    \renewcommand{\arraystretch}{1.5}  % Increases space between rows
    \begin{tabular}{>{$}c<{$} p{14cm}} %10cm
        \toprule
        \textbf{Symbol} & \textbf{Definition} \\
        \midrule
        \theta & Network parameters.\\
        \objective & Log joint distribution parameterized by $\theta$.\\
        \thetahat & Maximum \emph{a posteriori} parameters under the joint distribution~(Eq.~1).\\
        f_{\thetahat} & Network output with parameters $\thetahat$.\\
        \nabla_\theta f_{\thetahat} & Derivative of network output w.r.t. $\theta$ evaluated at $\theta = \thetahat$.\\
        {\objective}^{(t_\theta,\lambda)} & Log joint distribution parameterized by $\theta$ and regularized by $\lambda t_\theta$, e.g. $t_\theta$ could be a network output~$f_\theta(x)$ (Eq.~8) 
        or fine-tuning regularizer~$r$~(Eq.~13).\\ 
        \thetahat^{(t_\theta,\lambda)} & Parameters that maximize ${\objective}^{(t_\theta,\lambda)}$, e.g., see Eq.~9.\\ 
        \nabla_{\lambda} f_{ \thetahat^{(t_\theta,\lambda)}} & Derivative of network output under parameters $\thetahat^{(t_\theta,\lambda)}$ w.r.t. regularization weight~$\lambda$. Note that the optimal network parameters themselves are a function of $\lambda$, making the overall term a non-linear expression involving $\nabla_{\lambda} f_{\thetahat}$ (see Eq.~21).\\
        \bottomrule
    \end{tabular}
        \label{tab:notation} 
\end{table}

\section{ALGORITHMS FOR REGULARIZATION VARIATION (\name)}
\label{sec:hfl_algs}
\label{app:data_aug}

In this section we expand on variations of the \namep strategy 
and provide relevant algorithms where applicable.

\subsection{Pointwise \name}

The algorithm that follows most directly 
from Theorem~1 
targets the uncertainty calculation for an arbitrary evaluation point $x$ 
(not necessarily from the training or test set) 
and is given in Algorithm~\ref{alg:hfl}. 
The computational burden of recalculating the optimal regularized parameters 
for every different evaluation input 
means that this approach is not scalable to large models, 
motivating extensions to the regularization term 
which we consider next. 

\begin{algorithm*}[h]
   \caption{Pointwise \name}
   \label{alg:hfl}
\begin{algorithmic}
   \STATE {\bfseries Input:} network $f_\theta$, fitted MAP $\hat{\theta}$, training data $\D$ of size $n$, evaluation input $x$, scalar $\lambda$, step size schedule $\gamma(\cdot)$
   \STATE Warm start $\hat{\theta}^{(x, \lambda)} \gets \hat{\theta}$
   \STATE Initialize step counter $j=0$ 
   \REPEAT
   \STATE Sample training example $(x_i,y_i) \sim \D$
   \STATE Follow stochastic gradient $\hat{\theta}^{(x, \lambda)} \gets \hat{\theta}^{(x, \lambda)} + N \gamma(j) \nabla_\theta \tilde{\mathcal{L}}_{\theta}^{(x,\lambda)}(x_i,y_i)$
   \STATE $j \gets j + 1$
   \UNTIL{$\hat{\theta}^{(x, \lambda)}$ converges}
   \STATE {\bfseries Return:} predictive variance $\hat{\sigma}_f^2(x) := \frac{1}{\lambda} | f_{\hat{\theta}^{(x, \lambda)}}(x) - f_{\hat{\theta}}(x) |$ 
\end{algorithmic}
\end{algorithm*}

\subsection{Data-Augmentation Objective}

When the likelihood is Gaussian, 
the \namep regularization term can be absorbed into the likelihood. 
In particular, completing the square in Eq.~16, by pulling the term $\frac{\lambda \sigma^2}{n M}(\frac{\lambda \sigma^2}{2n M} - y_i)$ from a constant that does not depend on $\theta$, yields the objective, 
\begin{flalign}
	\mathcal{L}_i^{\mathrm{(DA)}} &= \log \mathcal{N}\left(y_i + \frac{\tilde{\lambda}}{n M} \sigma^2 \;\bigr|\; f_\theta(x_i), \sigma^2\right) + \frac{1}{n}\log p(\theta),
	\label{eq:aug_pt}
\end{flalign}
for observation variance hyperparameter $\sigma^2$. 
Under the common Gaussian likelihood assumption, 
Eq.~\ref{eq:aug_pt} implies that 
pre-trained \namep is performed 
by data augmentation, 
without requiring \emph{any} adjustment to the model architecture or training procedure.

\begin{algorithm*}[t!]
   \caption{Amortized \namep for Parameter Uncertainty}
   \label{alg:pt_hfl_pu}
\begin{algorithmic}
   \STATE {\bfseries Input:} network $f_\theta$, fitted MAP $\hat{\theta}$, training data $\D$ of size $n$, evaluation inputs $\{ \hat{x}_{i=1}^m \} $, scalar $\lambda$, step size schedule $\gamma(\cdot)$
   \STATE Warm start $\hat{\theta}^{(\hat{r}, \lambda)} \gets \hat{\theta}$
   \STATE Initialize step counter $j=0$
   \REPEAT
   \STATE Sample training example $(x_i,y_i) \sim \D$
   \STATE Follow gradient $\hat{\theta}^{(\hat{r}, \lambda)} \gets \hat{\theta}^{(\hat{r}, \lambda)} + n \gamma(j) \nabla_\theta \mathcal{L}_i^{\mathrm{(PU)}}\;\;\;$ \act{See Eq.~18} 
   \STATE $j \gets j + 1$
   \UNTIL{$\hat{\theta}^{(\hat{r}, \lambda)}$ converges}
   \STATE {\bfseries Return:} parameter variance $\hat{\sigma}_{\theta}^2(x) := \frac{1}{\lambda} | \hat{\theta}^{(\hat{r}, \lambda)} - \hat{\theta} |$
\end{algorithmic}
\end{algorithm*}

\subsection{\namep for Parameter Uncertainty}
\label{app:param_uc}

For completeness, we present Algorithm~\ref{alg:pt_hfl_pu} 
that corresponds to the parameter uncertainty objective 
given in Eq.~\ref{eq:pt_hfl_theta} in the main text. 
We apply this algorithm in Section~\ref{app:pu}. 

\section{FURTHER EXPERIMENTS AND HYPERPARAMETER SELECTION}
\label{app:further_exp}

In this section, we evaluate \namep 
and compare against the full Hessian and its approximations.

\subsection{Synthetic Data}
\label{sec:exp_synth}

We explore four data generating distributions designed to test both in-distribution inference 
and extrapolation to ``in-between'' uncertainty~\citep{foong2019between}. 
A feedforward neural network is used with 50 hidden units and $\tanh$ activations to ensure that the Hessian exists, and optimized using Adam \citep{kingma2017adam} with a learning rate of 0.005. 
Four methods for quantifying the epistemic uncertainty of the network are compared,
\begin{itemize}
	\item {\bf Exact Hessian} using Eq.~4 
	and both terms in the likelihood Hessian (Eq.~5). 
	\item {\bf GGN} using Eq.~4 
	with only the second term in likelihood Hessian (Eq.~5). 
	\item {\bf Eigenvector approximation} of the GGN for $P^{-1} \approx A \Lambda^{-1} A^\top$ using the top $k$ eigenvectors $A$ of $P$ (with corresponding eigenvalues $\Lambda$) where $k = \log_e(\mathrm{dim}(\theta))$ rounded to the nearest positive integer. The GGN is chosen because 
it only makes sense to use the eigenvector approximation 
when the Hessian is large. 
	\item {\bf \name} (this paper), using Algorithm~1. 
\end{itemize}
The metrics used to evaluate uncertainty performance are,
\begin{itemize}
	\item {\bf Prediction interval coverage probability (PICP)} assesses calibration 
	by evaluting how well prediction intervals cover the observed values. It is calculated 
	as the proportion of times the held-out observed value $y$ falls within 
	the 95\% confidence interval of the predicted observations (\emph{closer to 0.95 is better}). 
	\item {\bf Continuous Ranked Probability Score (CRPS)} is a generalization 
	of the Brier score for continuous outcomes, measuring the difference between 
	the CDF of the predicted observations and the CDF of the observed response variable (\emph{smaller is better}). 
	\item {\bf Negative log likelihood (NLL)} evaluates the negated predicted log likelihood of the held out observations~(\emph{smaller is better}). 
\end{itemize}

\begin{figure*}[t!]
  \centering
  \subfigure[Full Hessian]{
    \includegraphics[width=0.23\textwidth]{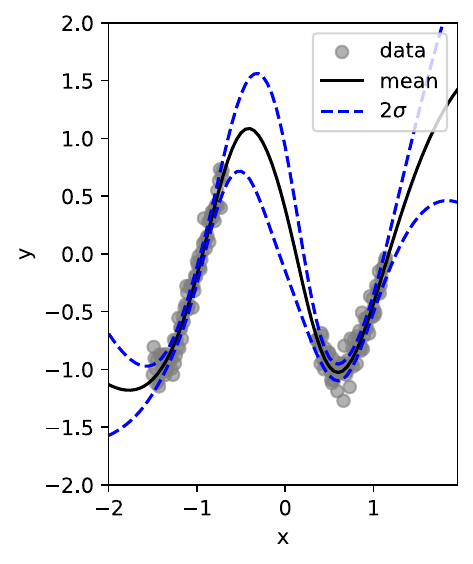}
    \label{fig:sin_full}
  }
  \subfigure[GGN]{
    \includegraphics[width=0.23\textwidth]{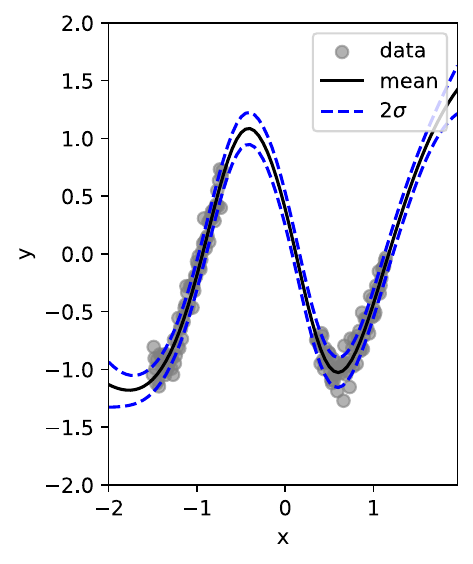}
    \label{fig:sin_ggn}
  }
  \subfigure[Eigen. Approx.]{
    \includegraphics[width=0.23\textwidth]{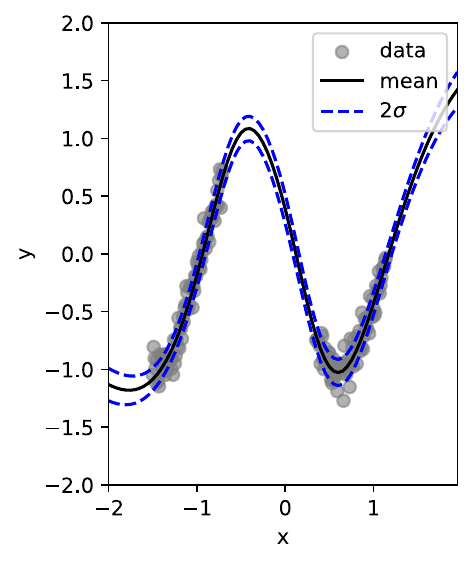}
    \label{fig:sin_diag}
  }
  \subfigure[HFL (this paper)]{
    \includegraphics[width=0.23\textwidth]{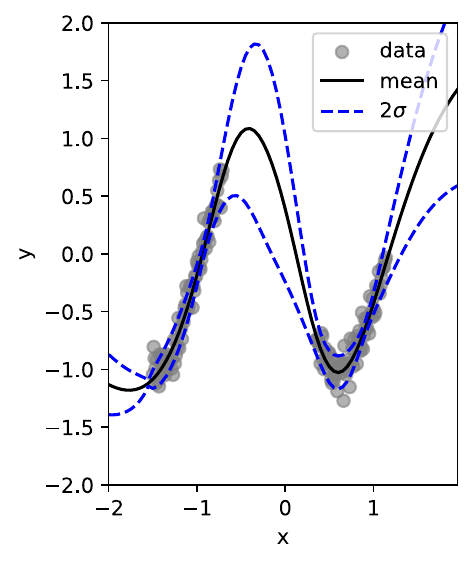}
    \label{fig:sin_hfl}
  }
  \caption{Illustration of epistemic uncertainties of the predicted mean for the Sin-Inbetween OOD task.}
  \label{fig:sin_ood}
\end{figure*}

The synthetic data-generating distributions are,
\label{app:exp_details}
\begin{itemize}
	\item {\bf Quadratic-Uniform} Draw 32 data points $x \sim \mathcal{N}(0, 1)$ and let $y = \frac{1}{10}x^2 - \frac{1}{2}x + 5 + \frac{1}{10}\epsilon$, where~$\epsilon \sim \mathcal{N}(0,1)$.  
	\item {\bf Quadratic-Inbetween} Draw 16 data points $x \sim \mathrm{Uniform}(-2, -\frac{1}{2})$ and 16 data points $x \sim \mathrm{Uniform}(\frac{4}{5}, \frac{5}{2})$, using the same response distribution as Quadratic-Uniform.
	\item {\bf Sin-Uniform} Sample 160 data points $x \sim \mathrm{Uniform}(-\frac{3}{2},\frac{23}{20})$ and let $y = -\sin(3x - \frac{3}{10}) + \frac{1}{10}\epsilon$, where $\epsilon \sim \mathcal{N}(0,1)$. 
	\item {\bf Sin-Inbetween} Fix 160 data points evenly spaced in ranges $[-1.5, -0.7)$ and $[0.35, 1.15)$ and let use the same response as Sin-Uniform. 
\end{itemize}

\begin{figure*}[t!]
  \centering
  \subfigure[Hyperparameter sweeps for observational variance.]{
    \includegraphics[width=0.48\textwidth]{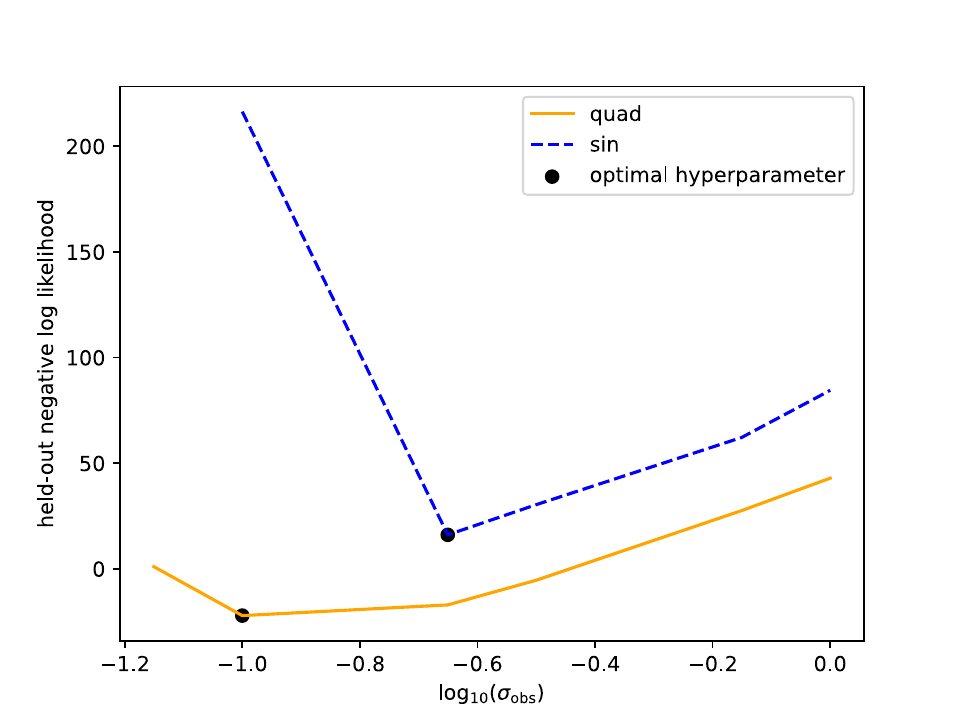}
	\label{fig:hyp_sweep}
  }
  \subfigure[Lambda sweep]{
    \includegraphics[width=0.48\textwidth]{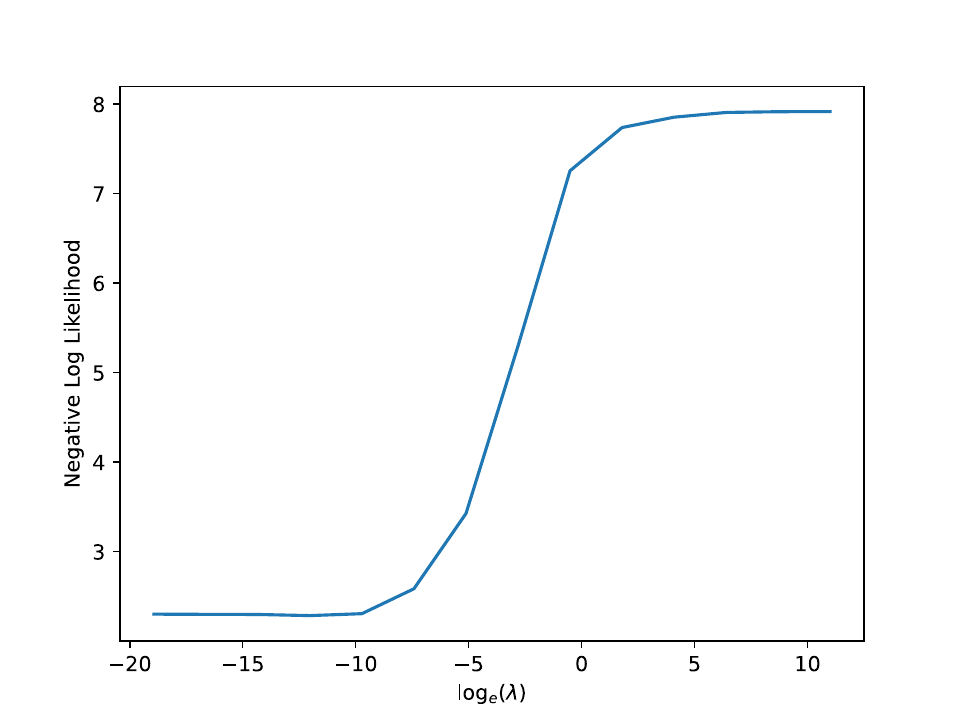}
	\label{fig:lambda_sweep}
  }
  \caption{Hyperparameter selection for modeling and \namep algorithm.}
  \label{fig:hyp_sweeps}
\end{figure*}

\label{app:evaluation}
\label{app:hyp_sweep}

\begin{table*}[h!]
\centering
\label{table:metrics}
\resizebox{\textwidth}{!}{%
\begin{tabular}{@{}lcccccccc@{}}
\toprule
& \multicolumn{8}{c}{Datasets} \\ 
\cmidrule(l){2-9} 
& \multicolumn{2}{c}{Quadratic} & \multicolumn{2}{c}{Quadratic-Inbetween} & \multicolumn{2}{c}{Sin} & \multicolumn{2}{c}{Sin-Inbetween} \\
& In-Distribution & OOD & In-Distribution & OOD & In-Distribution & OOD & In-Distribution & OOD \\ 
\midrule
\multirow{3}{*}{\emph{Full Hessian}} & \emph{PICP: 0.9375} & \emph{PICP: 0.9556} & \emph{PICP: 0.9688} & \emph{PICP: 0.8667} & \emph{PICP: 1.0000} & \emph{PICP: 0.8625} & \emph{PICP: 1.0000} & \emph{PICP: 0.9250} \\
& \emph{CRPS: 0.003156} & \emph{CRPS: 0.009017} & \emph{CRPS: 0.003797} & \emph{CRPS: 0.003331} & \emph{CRPS: 0.04666} & \emph{CRPS: 0.07498} & \emph{CRPS: 0.04486} & \emph{CRPS: 0.06022} \\
& \emph{NLL: -23.924} & \emph{NLL: -38.027} & \emph{NLL: -23.290} & \emph{NLL: -24.538} & \emph{NLL: -75.764} & \emph{NLL: 26.004} & \emph{NLL: -73.178} & \emph{NLL: 10.029} \\
\midrule
\midrule
\multirow{3}{*}{GGN} & PICP: 0.9375 & PICP: 0.9556 & PICP: {0.9688} & PICP: 0.8444 & PICP: 1.0000 & PICP: 0.8250 & PICP: 1.0000 & PICP: 0.8875 \\
& CRPS: 0.003762 & CRPS: 0.007771 & CRPS: 0.003992 & CRPS: 0.0008355 & CRPS: 0.04652 & CRPS: 0.05493 & CRPS: 0.04482 & CRPS: 0.04302 \\
& NLL: -23.790 & NLL: -38.751 & NLL: {-23.135} & NLL: -23.504 & NLL: -76.025 & NLL: 101.6 & NLL: -73.260 & NLL: 24.903 \\
\midrule
\multirow{3}{*}{Eigen. Approx.} & PICP: 0.9375 & PICP: {0.9556} & PICP: 0.9688 & PICP: 0.8444 & PICP: {1.0000} & PICP: 0.8250 & PICP: {1.0000} & PICP: 0.8875 \\
& CRPS: 0.003760 & CRPS: 0.007705 & CRPS: 0.003566 & CRPS: 0.0001922 & CRPS: {0.04605} & CRPS: {0.05364} & {CRPS: 0.04454} & CRPS: {0.04208} \\
& NLL: -23.789 & NLL: {-38.760} & NLL: -23.098 & NLL: -22.893 & NLL: {-77.027} & NLL: 109.7 & NLL: {-73.785} & NLL: 27.112 \\
\midrule
\multirow{3}{*}{\name} & PICP: {0.9375} & PICP: 0.9333 & PICP: 0.8750 & PICP: {0.8444} & PICP: 1.0000 & PICP: {0.8375} & PICP: 1.0000 & PICP: {0.9250} \\
& CRPS: {0.002347} & CRPS: {0.006976} & CRPS: {0.001782} & CRPS: {4.735e-05} & CRPS: 0.04691 & CRPS: 0.06397 & CRPS: 0.05016 & CRPS: 0.06301 \\
& NLL: {-23.899} & NLL: -38.440 & NLL: -22.571 & NLL: {-23.676} & NLL: -75.245 & NLL: {45.483} & NLL: -63.218 & NLL: {22.915} \\
\bottomrule
\end{tabular}%
}
\caption{Uncertainty quantification performance across datasets and methods. %Top performing metrics, as defined in Section~\ref{sec:experiments}, are in bold, with ties broken by the other metrics. 
Full Hessian (top line) is intractable at scale and included here for reference.}
\label{tab:results}
\end{table*}

For both the quadratic and sin datasets, 
the observational variance hyperparameter 
was selected through grid search over the set $\{ 0.005, 0.01, 0.05, 0.1, 0.5, 1.0 \}$ 
minimizing negative log likelihood for a held-out validation set of equal size to the training dataset. 
The results are summarized in Figure~\ref{fig:hyp_sweep}, with the optimal hyperparameter chosen indicated by a marker. The prior variance was fixed at 3.0, which was the minimum value to the nearest 0.1 for which the Hessian was sufficiently conditioned across all tasks to allow for inversion.

\begin{figure*}[t!]
\vskip 0.2in
\centering
\begin{center}
\adjustbox{trim={1.9cm} 0 {1.8cm} 0, clip}{\includegraphics[width=1.2\columnwidth]{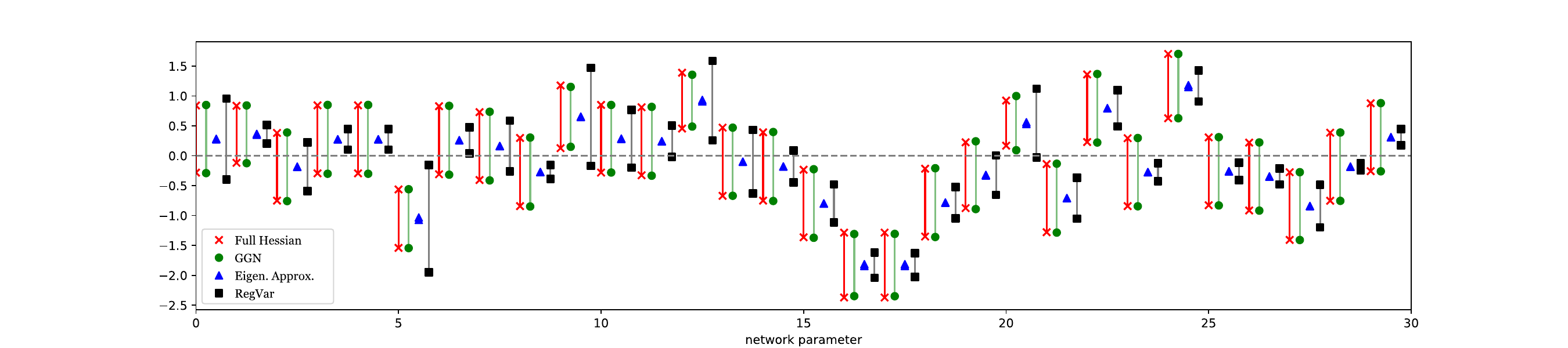}}
\caption{The uncertainty ranges of a random subset of 30 parameters of the neural network indicate that many parameters include 0 in their intervals. \namep recovers much of their structure just via additional regularization of the network parameters in the objective function.}
\label{fig:params}
\end{center}
\vskip -0.2in
\end{figure*}

\begin{figure}[h!]
\vskip 0.2in
\begin{center}
\includegraphics[width=0.60\columnwidth]{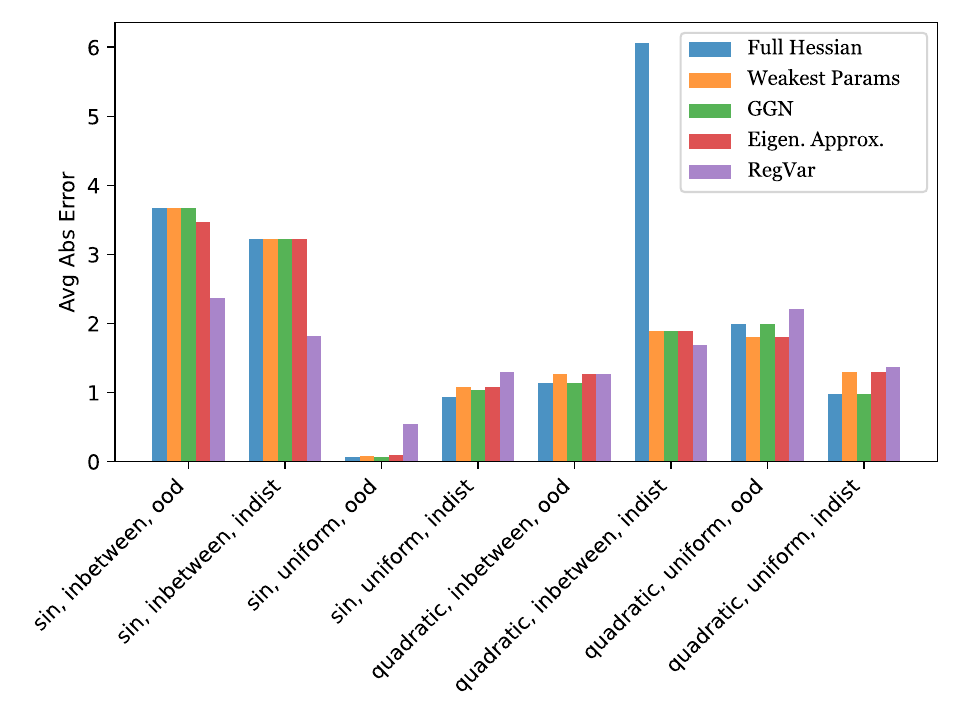}
\caption{Absolute errors in predicted means for all the tasks with networks with added sparsity using the uncertainty quantification methods.}
\label{fig:errorbars}
\end{center}
\vskip -0.2in
\end{figure}

The results of the evaluations are summarized in Table~\ref{tab:results}. 
We find that \namep performs commensurately 
with other approximations, and indeed the full Hessian, 
despite taking a significantly different approach. 
In particular, in several cases it achieves the best out-of-distribution (OOD) performance, 
likely due to its ability to avoid the GGN assumption 
that ignores an important part of the curvature around the MAP. 
To illustrate this point, 
we plot the predicted mean output 
for the methods on the Sin-Inbetween OOD task in Figure.~\ref{fig:sin_ood}. 
There is a noticeable difference to the GGN and eigenvector approximation, 
implying that in practice the residual of the loss 
may not be close to zero.

\subsection{Parameter Uncertainty Experiment}
\label{app:pu}

We also evaluated the parameter regularization 
approach given in Eq.~18. 
Figure~\ref{fig:params} shows the 
1 standard deviation interval about the predicted mean 
under the various methods. 
Recall that \namep for parameter uncertainty requires only 
to apply a small amount of additional L1 regularization to the parameters. 
Nonetheless, the intervals indicated often align with those requiring 
full Hessian inversion. 
These parameter uncertainties may be used for model introspection. 
As an example, 
we considered the task of adding sparsity to a network, 
setting parameters to 0 when their uncertainty intervals overlap with 0. 
The mean absolute errors of the resulting predictions are shown in Figure~\ref{fig:errorbars}.
\namep does competitively, particularly for in-between uncertainty. 
An outlier result is the absolute error of the full Hessian 
in the Quadratic-Inbetween in-distribution task. 
This was due to mis-selecting an entire group of units to be set to 0, 
greatly impacting the shape of predictions.

\end{document}